\DeclareMathOperator*{\argmin}{\arg\!\min}
\def \func{f} 
\def \RFE {g} 
\def \RFERV {G} 
\def \input {{\mathbf x}} 
\def \inputRV {{\mathbf X}} 
\def \d {d} 
\def \inputspace {\mathcal X} 
\def \D {D} 
\def \ampl {c} 
\def \Ampl {C} 
\def \amplTH {{\bar \ampl}} 
\def \amplLS {{\ampl_N}} 
\def \amplLSRV {{\Ampl_N}} 
\def \k {k} 
\def \freq {{\boldsymbol\omega}} 
\def \freqRV {{\boldsymbol\Omega}} 
\def \b {b} 
\def \B {B} 
\def \reg {\lambda} 
\def \offampl {q} 
\def \A {\mathbf{A}} 
\def \a {\mathbf{a}} 
\def \perturb {\zeta}  
\def \etaRV {{H}} 
\def \iter{n} 
\def \N{N} 
\newtheorem{definition}{Definition}
\newtheorem{theorem}{Theorem}
\newcommand{\email}[1]{\protect\href{mailto:#1}{#1}}
\title{Online Optimization with Costly and Noisy Measurements using Random Fourier Expansions
}
\begin{document}

\author{Laurens~Bliek\IEEEauthorrefmark{1},~
        Hans R. G. W. Verstraete\IEEEauthorrefmark{1},~
~Michel~Verhaegen,~\IEEEmembership{Member,~IEEE}
and Sander Wahls,~\IEEEmembership{Member,~IEEE}%
\thanks{\IEEEauthorrefmark{1}Both authors contributed equally to this work. Corresponding authors: 
 \email{l.bliek@tudelft.nl}, \email{h.r.g.w.verstraete@tudelft.nl}. }%
\thanks{All authors are with the Delft Center for Systems and Control, Delft University of
 Technology, Mekelweg 2, 2628 CD, Delft, Netherlands.}
}

\markboth{IEEE Transactions on Neural Networks and Learning Systems,~Vol.~XX, No.~X, September~20XX}%
{Shell \MakeLowercase{\textit{et al.}}: Bare Demo of IEEEtran.cls for Journals}

\maketitle

\begin{abstract}
This paper analyzes DONE, an online optimization algorithm that iteratively minimizes an unknown function based on costly and noisy measurements.
The algorithm maintains a surrogate of the unknown function in the form of a random Fourier expansion (RFE).
The surrogate is updated whenever a new measurement is available, and then used to determine the next measurement point.
The algorithm is comparable to Bayesian optimization algorithms, but its computational complexity per iteration does not depend on the number of measurements.
We derive several theoretical results that provide insight on how the hyper-parameters of the algorithm should be chosen.
The algorithm is compared to a Bayesian optimization algorithm for an analytic benchmark problem and three applications, namely, optical coherence tomography, optical beam-forming network tuning, and robot arm control.
It is found that the DONE algorithm is significantly faster than Bayesian optimization in the discussed problems, while achieving a similar or better performance.
\end{abstract}


\begin{IEEEkeywords}
derivative-free optimization, Bayesian optimization, surrogate model, learning systems, adaptive optics
\end{IEEEkeywords}



\IEEEpeerreviewmaketitle

\section{Introduction}

\IEEEPARstart{M}{any}
optimization algorithms use the derivative of an objective function, but often this information is not available in practice.
Regularly, a closed form expression for the objective function is not available and function evaluations are costly.
Examples are objective functions that rely on the outcome of a simulation or an experiment.
Approximating derivatives with finite differences is costly in high-dimensional problems, especially if the objective function is costly to evaluate.
More efficient algorithms for derivative-free optimization (DFO) problems exist.
Typically, in DFO algorithms a model is used that can be optimized without making use of the derivative of the underlying function~\cite{conn2009introduction,rios2013derivative}.
Some examples of commonly used DFO algorithms are the simplex method ~\cite{nelder1965simplex}, NEWUOA~\cite{powell2006newuoa}, BOBYQA~\cite{powell2009bobyqa}, and DIRECT~\cite{jones1993lipschitzian}.   
Additionally, measurements of a practical problem are usually corrupted by noise.
Several techniques have been developed to cope with a higher noise level and make better use of the expensive objective functions evaluations. 
Filtering and pattern search optimization algorithms such as implicit filtering~\cite{gilmore1995implicit} and SID-PSM \cite{custodio2007} can handle local minima resulting from high frequency components.
Bayesian optimization, also known as sequential Kriging optimization, deals with heteroscedastic noise and perturbations very well. 
One of the first and best known Bayesian optimization algorithms is EGO~\cite{jones1998efficient}.
Bayesian optimization relies on a surrogate model that represents a probability distribution of the unknown function under noise, for example Gaussian processes or Student's-t processes \cite{krige1951,bergstra2011algorithms,hutter2011sequential,martinez2014}.
In these processes different kernels and kernel learning methods are used for the covariance function \cite{roustant2012dicekriging,snoek2012practical}.
The surrogate model is used to decide where the next measurement should be taken.
New measurements are used to update the surrogate model.
Bayesian optimization has been successfully used in various applications, including active user modeling and reinforcement learning \cite{brochu2010tutorial}, robotics \cite{Martinez2009}, hyper-parameter tuning \cite{bergstra2011algorithms}, and optics \cite{Rehman2015}.

Recently, the Data-based Online Nonlinear Extremum-seeker (DONE) algorithm was proposed in~\cite{Verstraete15}. 
It is similar to Bayesian optimization, but simpler and faster.
The DONE algorithm uses random Fourier expansions~\cite{rahimi2007} (RFEs) as a surrogate model.
The nature of the DONE algorithm makes the understanding of the hyper-parameters easier.
In RFE models certain parameters are chosen randomly.
In this paper, we derive a close-to-optimal probability distribution for some of these parameters.
We also derive an upper bound for the regularization parameter used in the training of the RFE model.

The advantages of the DONE algorithm are illustrated in an analytic benchmark problem and three applications. 
We numerically compare DONE to BayesOpt~\cite{martinez2014}, a Bayesian optimization library that was shown to outperform many other similar libraries in~\cite{martinez2014}.
The first application is optical coherence tomography (OCT), a 3D imaging method based on interference often used to image the human retina \cite{nasiri2009,Bonora13,Verstraete15}.
The second application we consider is the tuning of an optical beam-forming network (OBFN). OBFNs are used in wireless communication systems to steer phased array antennas in the desired direction by making use of positive interference of synchronized signals~\cite{hansen2009phased,roeloffzen2005ring,meijerink2010novel,zhuang2006single,zhuang2010ring,Bliek2015166}.
The third application is a robot arm of which the tip has to be directed to a desired position~\cite{de2009method}.

This paper is organized as follows.
Section~\ref{sec:RFE} gives a short overview and provides new theoretical insights on random Fourier expansions, the surrogate model on which the DONE algorithm is based.
We have noticed a gap in the literature, where approximation guarantuees are given for ideal, but unknown RFE weights, while in practice RFE weights are computed via linear least squares.
We investigate several properties of the ideal weights and combine these results with existing knowledge of RFEs to obtain approximation guarantees for least-square weights.
Section~\ref{sec:algstep1} explains the DONE algorithm.
Theoretically optimal as well as more practical ways to choose the hyper-parameters of this algorithm are given in Section~\ref{sec:hyp}. 
In Section~\ref{sec:appl} the DONE algorithm and BayesOpt are compared for a benchmark problem and for the three aforementioned applications.
We conclude the paper in Section~\ref{sec:conclusion}.

\section{Random Fourier Expansions}\label{sec:RFE}

In this section, we will describe the surrogate model that we will use for optimization.
There is a plethora of black-box modeling techniques to approximate a function from measurements available in the literature, with neural networks, kernel methods, and of course classic linear models probably being the most popular~\cite{hofmann2008kernel,suykens2012nonlinear,theodoridis2015machine}. 
In this paper, we use random Fourier expansions (RFEs) \cite{rahimi2007} to model the unknown function because they offer a unique mix of computational efficiency, theoretical guarantees and ease of use that make them ideal for online processing. While general neural networks are more expressive than random Fourier features, they are difficult to use and come without  theoretical guarantees. Standard kernel methods suffer from high computational complexity because the number of kernels equals the number of measurements. RFEs have been originally introduced to reduce the computational burden that comes with kernel methods, as will be explained next \cite{rahimi2007,rahimi2009weighted,singh2012online}.

Assume that we are provided $N$ scalar measurements $y_i$ taken at measurement points $\input_i\in \mathbb{R}^d$ as well as a kernel $k(\input_i,\input_j)$ that, in a certain sense, measures the closeness of two measurement points. To train the kernel expansion
\begin{align}\label{eq:kernel}
\RFE_{KM}(\input) & = \sum_{i=1}^{N}a_{i} k(\input,\input_i),
\end{align}
a linear system involving the kernel matrix $[k(\mathbf x_i, \mathbf x_j)]_{i,j}$ has to be solved for the coefficients $a_i$. The computational costs of training and evaluating \eqref{eq:kernel} grow cubicly and linearly in the number of datapoints $N$, respectively.
This can be prohibitive for large values of $N$.
We now explain how RFEs can be used to reduce the complexity~\cite{rahimi2007}.
Assuming the kernel $k$ is shift-invariant and has Fourier transform $p$, it can be normalized such that $p$ is a probability distribution~\cite{rahimi2007}.
That is, we have
\begin{align}
k(\mathbf x_i - \mathbf x_j) & = \int_{\mathbb R^{\d}} p(\freq) e^{-i\freq^T(\mathbf x_i - \mathbf x_j)}d\freq.\\
\intertext{We will use several trigonometric properties and the fact that $k$ is real to continue the derivation. This gives}
k(\mathbf x_i - \mathbf x_j) & = \int_{\mathbb R^{\d}} p(\freq) \cos(\freq^T(\mathbf x_i - \mathbf x_j))d\freq\nonumber\\
& =  \int_{\mathbb R^{\d}} p(\freq)   \cos(\freq^T(\mathbf x_i - \mathbf x_j)) \nonumber\\ 
& \quad + p(\freq)\int_0^{2\pi} \cos(\freq^T(\mathbf x_i + \mathbf x_j) + 2\b) d\b d\freq\nonumber\\
& = \frac{1}{2\pi}  \int_{\mathbb R^{\d}} p(\freq)  \int_0^{2\pi} \cos(\freq^T(\mathbf x_i - \mathbf x_j)) \nonumber\\ 
& \qquad + \cos(\freq^T(\mathbf x_i + \mathbf x_j) + 2\b) d\b d\freq\nonumber\\
& = \frac{1}{2\pi}  \int_{\mathbb R^{\d}} p(\freq)\int_0^{2\pi} 2\cos(\freq^T \mathbf x_i + \b) \nonumber\\
& \qquad \cdot \cos(\freq^T\mathbf x_j + \b) d\b d\freq\nonumber\\
& = \mathbb E[2\cos(\freqRV^T \mathbf x_i + \B)\cos(\freqRV^T\mathbf x_j + \B)] \nonumber\\
& \approx \frac{2}{\D}\sum_{\k=1}^{\D}\cos(\freq_{\k}^T \mathbf x_i + \b_{\k})\cos(\freq_{\k}^T\mathbf x_j + \b_{\k}),
\label{eq:kerneltoRFE}
\end{align}
if $\freq_{\k}$ are independent samples of the random variable $\freqRV$ with probability distribution function (p.d.f.) $p$, and $\b_{\k}\in[0,2\pi]$ are independent samples of the random variable $\B$ with a uniform distribution. 
For $\ampl_{\k} = \sum_{i=1}^\N\frac{2}{\D}a_i\cos(\freq_{\k}^T \mathbf x_i + \b_{\k})$ we thus have:
\begin{align}
\RFE_{KM}(\input) &\approx \sum_{\k=1}^{\D} \ampl_{\k} \cos(\freq_{\k}^T\mathbf x + \b_{\k}).
\end{align}
Note that the number of coefficients $\D$ is now independent of the number of measurements $\N$.
This is especially advantageous in online applications where the number of measurements $\N$ keeps increasing.
We use the following definition of a random Fourier expansion.
\\
\begin{definition}\label{def:RFE}

A Random Fourier Expansion (RFE) is a function of the form $\RFE:\mathbb R^{\d} \rightarrow \mathbb R$, 
\begin{align}
\RFE(\input) = \sum_{\k=1}^\D \ampl_{\k} \cos(\freq_{\k}^T \input + \b_{\k}),
\end{align}
with $D\in \mathbb N$, the $\b_{\k}$ being realizations of independent and identically distributed (i.i.d.) uniformly distributed random variables $\B_k$ on $[0,2\pi]$, and with the $\freq_{\k}\in\mathbb R^{\d}$ being realizations of i.i.d. random vectors $\freqRV_k$ with an arbitrary continuous p.d.f. $p_{\freqRV}$. 
The $\B_{\k}$ and the $\freqRV_{\k}$ are assumed to be mutually independent. 
\end{definition}

We finally remark that there are other approaches to reduce the complexity of kernel methods and make them suitable for online processing, which are mainly based on sparsity~\cite{burges1996simplified,scholkopf2002sampling,quinonero2005unifying,chen2013quantized}. However, these are much more difficult to tune than using RFEs~\cite{singh2012online}. 
It is also possible to use other basis functions instead of the cosine, but the cosine was among the top performers in an exhaustive comparison with similar models~\cite{zhang2015comprehensive}.
Moreover, the parameters of the cosines have intuitive interpretations in terms of the Fourier transform. 

\subsection{Ideal RFE Weights}\label{sec:theory}

In this section, we deal with the problem of fitting a RFE to a given function $f$. 
We derive ideal but in practice unknown weights $\ampl$.
We start with the case of infinitely many samples and basis functions (see also~\cite{girosi1992convergence, barron1993universal}), which corresponds to turning the corresponding sums into integrals.
\begin{theorem}\label{thm:zeroerror}

Let $\func \in L^2(\mathbb R^{\d})$ be a real-valued function and let 
\begin{align} \label{eq:c_as_cos}
\amplTH(\freq,\b)& =\left\{\begin{array}{cc}\frac{1}{\pi}|\hat \func(\freq)|\cos(\angle \hat \func(\freq) - \b), & \ \b\in[0,2\pi],\\
0, & \mathrm{otherwise}.\end{array}\right.
\end{align}
Then, for all $\input\in \mathbb R^{\d}$,
\begin{align}
	\func(\input) & = \frac{1}{(2\pi)^{\d}} \int_{\mathbb R^{\d}}\int_0^{2\pi}\amplTH(\freq,\b)\cos(\freq^T \input + \b)d\b d\freq. \label{eq:zeroerror}
\end{align}    
\end{theorem}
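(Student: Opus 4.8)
The plan is to reduce the identity to the classical Fourier inversion theorem applied to a real-valued function, with the inner $\b$-integral playing the role of "averaging a phase-shifted cosine back into a single cosine." First I would fix the convention $\hat\func(\freq) = \int_{\mathbb R^{\d}}\func(\input)e^{-i\freq^T\input}d\input$, write $\hat\func(\freq) = |\hat\func(\freq)|e^{i\angle\hat\func(\freq)}$, and use that $\func$ is real to get $\hat\func(-\freq) = \overline{\hat\func(\freq)}$. Pairing $\freq$ with $-\freq$ in the inversion integral then collapses the complex exponentials into a cosine, yielding the real inversion formula
\begin{align}
\func(\input) = \frac{1}{(2\pi)^{\d}}\int_{\mathbb R^{\d}} |\hat\func(\freq)|\cos\!\big(\freq^T\input + \angle\hat\func(\freq)\big)\,d\freq. \nonumber
\end{align}

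Next I would compute the inner integral over $\b$ appearing in \eqref{eq:zeroerror}. Applying the product-to-sum identity $\cos\alpha\cos\beta = \tfrac12[\cos(\alpha-\beta)+\cos(\alpha+\beta)]$ with $\alpha = \angle\hat\func(\freq)-\b$ and $\beta = \freq^T\input+\b$, the summand $\cos(\alpha+\beta) = \cos(\angle\hat\func(\freq)+\freq^T\input)$ is independent of $\b$, while $\cos(\alpha-\beta) = \cos(\angle\hat\func(\freq)-\freq^T\input-2\b)$ integrates to zero over the full period $\b\in[0,2\pi]$. Hence $\int_0^{2\pi}\cos(\angle\hat\func(\freq)-\b)\cos(\freq^T\input+\b)\,d\b = \pi\cos(\freq^T\input+\angle\hat\func(\freq))$, so by the definition \eqref{eq:c_as_cos} of $\amplTH$ the inner integral of $\amplTH(\freq,\b)\cos(\freq^T\input+\b)$ equals exactly $|\hat\func(\freq)|\cos(\freq^T\input+\angle\hat\func(\freq))$ (the points where $\hat\func(\freq)=0$ are harmless since the amplitude vanishes there).

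Substituting this back into the right-hand side of \eqref{eq:zeroerror} and comparing with the real inversion formula above gives the claim. The only step requiring genuine care is the justification of integrating over $\b$ first (Fubini) together with the pointwise validity of Fourier inversion: both are immediate once one knows $\hat\func\in L^1(\mathbb R^{\d})$, since then $\int_{\mathbb R^{\d}}\int_0^{2\pi}|\amplTH(\freq,\b)\cos(\freq^T\input+\b)|\,d\b\,d\freq \le 2\|\hat\func\|_{L^1}<\infty$ and the inversion integral converges absolutely and represents $\func$ everywhere after adjusting $\func$ on a null set. I would therefore either add the hypothesis $\hat\func\in L^1$ or state \eqref{eq:zeroerror} as holding for almost every $\input$; apart from this integrability/pointwise subtlety, the argument is just the two trigonometric reductions described above.
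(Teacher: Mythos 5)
Your proof is correct and is essentially the paper's argument run in the opposite direction: the paper starts from the pointwise Fourier inversion formula and inserts the uniform $b$-average (using $\int_0^{2\pi}e^{-2ib}\,db=0$), while you evaluate the inner $b$-integral via the product-to-sum identity and reduce to the same real inversion formula, which is the identical computation in different clothing. Your closing caveat is also well taken: the paper assumes only $f\in L^2$ yet claims the identity for all $\mathbf{x}$ and silently uses pointwise inversion, so adding $\hat{f}\in L^1$ (or stating the conclusion almost everywhere) as you propose is exactly what a fully rigorous version requires.
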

Here, $|\hat \func|$ and $\angle \hat \func$ denote the magnitude and phase of the Fourier transform $\hat \func(\freq) = \int_{\mathbb R^{\d}} \func(\input)e^{-i \freq^T \input}d\input$. 
The sets $L^2$ and $L^\infty$ denote the space of square integrable functions and the space of all essentially bounded functions, respectively.
\begin{proof}
For $\b\in[0,2\pi]$, we have
\begin{align}
\amplTH(\freq,\b) &= \frac{1}{\pi}|\hat \func(\freq)|\cos(\angle \hat \func(\freq) - \b) \nonumber \\
&=\frac{1}{\pi}\mathrm{Re}\left\{\hat \func(\freq) e^{-ib}\right\}. \label{eq:c_as_real}
\end{align}
Using that $\func(\input)$ is real, we find that 
\begin{align}    
 \func(\input) = & \mathrm{Re}\left\{ \frac{1}{(2\pi)^{\d}} \int_{\mathbb R^{\d}}\hat \func(\freq)e^{i\freq^T \input}d\freq  \right\}\nonumber\\
= & \mathrm{Re}\bigg\{ \frac{1}{(2\pi)^{\d}} \int_{\mathbb R^{\d}}\Big(\hat \func(\freq)e^{i\freq^T \input} \frac{1}{2\pi} \int_0^{2\pi}1 d\b + 
    \nonumber\\ & 
    \hat \func(\freq)e^{-i\freq^T \input} \underbrace{\int_0^{2\pi} e^{-2i\b} d\b}_{=0}\Big) d\freq \bigg\}\nonumber\\
= & \mathrm{Re}\left\{ \frac{1}{\pi} \frac{1}{(2\pi)^{\d}} \int_{\mathbb R^{\d}}\int_0^{2\pi}\hat \func(\freq)e^{-ib} 
    \right.\nonumber\\ & \qquad \left.
    \frac{1}{2}\left[e^{i(\freq^T \input + \b)} +  e^{-i(\freq^T \input + \b)}\right] d\b d\freq \right\}\nonumber\\
=  & \mathrm{Re}\left\{  \frac{1}{\pi} \frac{1}{(2\pi)^{\d}} \int_{\mathbb R^{\d}}\int_0^{2\pi}\hat \func(\freq) e^{-ib}\cos(\freq^T \input + \b) d\b d\freq \right\}\nonumber\\
\stackrel{\eqref{eq:c_as_real}}{=}& \frac{1}{(2\pi)^{\d}} \int_{\mathbb R^{\d}}\int_0^{2\pi}\amplTH(\freq,\b)\cos(\freq^T \input + \b)d\b d\freq.
\end{align}
\end{proof}

For $\b\in[0,2\pi]$, we have another useful expression for the ideal weights that is used later on in this section, namely
\begin{align}
\amplTH(\freq,\b) & = \frac{1}{\pi}\mathrm{Re}\left\{\hat \func(\freq) e^{-ib}\right\}\nonumber\\
& = \frac{1}{\pi}\mathrm{Re}\left\{\int_{\mathbb R^{\d}}\func(\input)e^{-i(\freq^T\input + b)}d\input\right\}\nonumber\\
& = \frac{1}{\pi}\int_{\mathbb R^{\d}}\func(\input)\cos(\freq^T\input + b)d\input. \label{eq:c_as_integral}
\end{align}

The function $\amplTH$ in Theorem~\ref{thm:zeroerror} is not unique.
However, of all functions $\ampl$ that satisfy \eqref{eq:zeroerror}, the given $\amplTH$ is the one with minimum norm.

\begin{theorem}
\label{thm:minnorm}
Let $\amplTH$ be as in Theorem~\ref{thm:zeroerror}. If $\tilde \ampl:\mathbb R^{\d}\times [0, 2\pi] \rightarrow \mathbb R$ satisfies
\begin{align}
	\func(\input) & = \frac{1}{(2\pi)^{\d}} \int_{\mathbb R^{\d}}\int_0^{2\pi}\tilde \ampl(\freq,\b)\cos(\freq^T \input + \b)d\b d\freq
    \quad \mathrm{a.e.}
    \label{eq:inv}
\end{align}   
then $||\tilde \ampl||^2_{L^2} \geq ||\amplTH||^2_{L^2} =\frac{(2 \pi)^\d }{\pi} ||\func||_{L^2}^2,$
with equality if and only if $\tilde \ampl = \amplTH$ in the $L^2$ sense.
\end{theorem}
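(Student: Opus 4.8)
The plan is a standard minimum‑norm (orthogonal projection) argument in the Hilbert space $L^2(\mathbb R^{\d}\times[0,2\pi])$. First I would dispose of the trivial case: if $\tilde\ampl\notin L^2$ then $\|\tilde\ampl\|_{L^2}^2=\infty$ and the inequality holds trivially, so I may assume $\tilde\ampl\in L^2$ and set $\delta:=\tilde\ampl-\amplTH$. Note $\amplTH\in L^2$, which will follow from the norm computation below together with $\func\in L^2$. Since $\amplTH$ reproduces $\func$ through the integral operator in \eqref{eq:zeroerror} for every $\input$ (Theorem~\ref{thm:zeroerror}), and $\tilde\ampl$ does so for a.e. $\input$ by the hypothesis \eqref{eq:inv}, the difference satisfies
\[
\int_{\mathbb R^{\d}}\int_0^{2\pi}\delta(\freq,\b)\cos(\freq^T\input+\b)\,d\b\,d\freq=0\qquad\text{for a.e. }\input .
\]

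The key step is to show that $\amplTH$ is orthogonal, in $L^2(\mathbb R^{\d}\times[0,2\pi])$, to every $\delta$ with this property. Here I would use the alternative representation \eqref{eq:c_as_integral}, $\amplTH(\freq,\b)=\frac1\pi\int_{\mathbb R^{\d}}\func(\input)\cos(\freq^T\input+\b)\,d\input$. Substituting this into $\langle\amplTH,\delta\rangle_{L^2}$ and interchanging the order of integration moves the $\delta$-integral inside, where it vanishes for a.e. $\input$ by the displayed identity, so $\langle\amplTH,\delta\rangle_{L^2}=0$. The Pythagorean identity then gives
\[
\|\tilde\ampl\|_{L^2}^2=\|\amplTH\|_{L^2}^2+2\langle\amplTH,\delta\rangle_{L^2}+\|\delta\|_{L^2}^2=\|\amplTH\|_{L^2}^2+\|\delta\|_{L^2}^2\ \ge\ \|\amplTH\|_{L^2}^2 ,
\]
with equality precisely when $\|\delta\|_{L^2}=0$, i.e.\ $\tilde\ampl=\amplTH$ in the $L^2$ sense. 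To pin down the constant I would compute directly from \eqref{eq:c_as_cos}: for each fixed $\freq$ one has $\int_0^{2\pi}\cos^2(\angle\hat\func(\freq)-\b)\,d\b=\pi$, so $\|\amplTH\|_{L^2}^2=\frac1\pi\int_{\mathbb R^{\d}}|\hat\func(\freq)|^2\,d\freq$, and Parseval's theorem (with the paper's convention $\hat\func(\freq)=\int_{\mathbb R^{\d}}\func(\input)e^{-i\freq^T\input}d\input$, hence $\int|\hat\func|^2\,d\freq=(2\pi)^{\d}\int|\func|^2\,d\input$) yields $\|\amplTH\|_{L^2}^2=\frac{(2\pi)^{\d}}{\pi}\|\func\|_{L^2}^2$.

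The main obstacle I anticipate is the rigorous justification of the interchange of integration in the orthogonality step: an $L^2$ function over the infinite-measure domain $\mathbb R^{\d}\times[0,2\pi]$ need not be in $L^1$, so Fubini/Tonelli is not immediate, and one likely has to invoke an extra integrability or decay assumption on $\func$ (e.g.\ $\func\in L^1\cap L^2$, which is implicitly used anyway in the pointwise definition of $\hat\func$) or argue by a density/approximation step. Once this interchange is secured, the remainder is routine Hilbert-space geometry plus the elementary integrals above.
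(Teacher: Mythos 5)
Your proposal is correct and follows essentially the same route as the paper: decompose $\tilde\ampl=\amplTH+\offampl$, obtain orthogonality $\langle\amplTH,\offampl\rangle_{L^2}=0$ by combining the representation \eqref{eq:c_as_integral} with a Fubini interchange and the reproducing property of $\tilde\ampl$, then conclude via the Pythagorean identity and the Parseval computation $||\amplTH||_{L^2}^2=\frac{(2\pi)^{\d}}{\pi}||\func||_{L^2}^2$. The only differences are cosmetic (you apply the interchange directly to the difference $\offampl$ rather than comparing with $\int\func^2$ as the paper does), and the integrability caveat you flag for the interchange is likewise left implicit in the paper's own proof.
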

 
\begin{proof}
First, using Parseval's theorem and $\int_0^{2\pi}\cos(a-b)^2 d\b = \pi$ for any real constant $a$, note that
\begin{align}
||\amplTH||_{L^2}^2 & = \int_{\mathbb R^{\d}}\int_0^{2\pi}\amplTH(\freq,\b)^2 d\b d\freq\nonumber\\
&\stackrel{\eqref{eq:c_as_cos}}{=} \int_{\mathbb R^{\d}}\int_0^{2\pi}\frac{1}{\pi^2}|\hat \func(\freq)|^2 \cos(\angle \hat\func(\freq) - \b)^2 d\b d\freq\nonumber\\
& = \int_{\mathbb R^{\d}}\frac{1}{\pi^2}|\hat \func(\freq)|^2 \int_0^{2\pi}\cos(\angle \hat \func(\freq) - \b)^2 d\b d\freq\nonumber\\
& = \int_{\mathbb R^{\d}}\frac{1}{\pi}|\hat \func(\freq)|^2 d\freq\nonumber\\
& = \frac{(2\pi)^{\d}}{\pi}\int_{\mathbb R^{\d}}\func(\input)^2 d\input =
\frac{(2\pi)^{\d}}{\pi}||\func||_{L^2}^2.\label{eq:normcisf}
\end{align}
Assume that $\tilde \ampl(\freq,\b) = \amplTH(\freq,\b) + \offampl(\freq,\b)$. Then we get
\begin{align}
	& \int_{\mathbb R^{\d}} \func(\input)^2 d\input\nonumber\\
    & \stackrel{\eqref{eq:inv}}{=} \int_{\mathbb R^{\d}} \func(\input) \frac{1}{(2\pi)^{\d}} \int_{\mathbb R^{\d}}\int_0^{2\pi} \tilde \ampl(\freq,\b)\cos(\freq^T \input + \b)d\b d\freq d\input \nonumber\\
	& = \frac{1}{(2\pi)^{\d}} \int_{\mathbb R^{\d}}\int_0^{2\pi} \tilde \ampl(\freq,\b) \int_{\mathbb R^{\d}} \func(\input) \cos(\freq^T \input + \b) d\input d\b d\freq \nonumber\\
    & \stackrel{\eqref{eq:c_as_integral}}{=} \frac{\pi}{(2\pi)^{\d}} \int_{\mathbb R^{\d}}\int_0^{2\pi} \tilde \ampl(\freq,\b) \amplTH(\freq,\b) d\b d\freq \nonumber\\
    & = \frac{\pi}{(2\pi)^{\d}} \int_{\mathbb R^{\d}}\int_0^{2\pi} \amplTH(\freq,\b)^2+ \amplTH(\freq,\b) \offampl(\freq,\b) d\b d\freq \nonumber\\
    & \stackrel{\eqref{eq:normcisf}}{=} \int_{\mathbb R^{\d}} \func(\input)^2 d\input + \frac{\pi}{(2\pi)^{\d}} \int_{\mathbb R^{\d}}\int_0^{2\pi}  \amplTH(\freq,\b) \offampl(\freq,\b) d\b d\freq.
\end{align}
Following the above equality we can conclude that
$\int_{\mathbb R^{\d}}\int_0^{2\pi}  \amplTH(\freq,\b) \offampl(\freq,\b) d\b d\freq = 0$. The following now holds:
\begin{align}
	||\tilde \ampl||^2_{L^2} & = ||\amplTH + \offampl||^2_{L^2} \nonumber\\
    & = \int_{\mathbb R^{\d}}\int_0^{2\pi} \amplTH(\freq,\b)^2+ 2\amplTH(\freq,\b) \offampl(\freq,\b) + \offampl(\freq,\b)^2 d\b d\freq \nonumber\\
	& = ||\amplTH||^2_{L^2} + ||\offampl||^2_{L^2} 
    \geq ||\amplTH||^2_{L^2} .
\end{align}

Furthermore, equality holds if and only if $||q||_{L^2}=0$. 
That is, the minimum norm solution is unique in $L^2$.

\end{proof}

These results will be used to derive ideal weights for a RFE with a finite number of basis functions as in Definition~\ref{def:RFE} by sampling the weights in~\eqref{eq:c_as_cos}. We prove unbiasedness in the following theorem, while variance properties are analyzed in Appendix~\ref{app:pw}.

\begin{theorem}\label{thm:unbiased}
For any continuous p.d.f. $p_{\freqRV}$ with $p_{\freqRV}(\freq)>0$ if $|\hat \func(\freq)|>0$, the choice
\begin{align}
\Ampl_{\k}  & = \frac{2}{\D (2\pi)^{\d}} \frac{|\hat \func(\freqRV_{\k})|}{p_{\freqRV}(\freqRV_{\k})}\cos(\angle \hat \func(\freqRV_{\k}) - \B_{\k})
\end{align}
makes the (stochastic) RFE $\RFERV(\input) = \sum_{\k=1}^\D \Ampl_{\k} \cos(\freqRV_{\k}^T \input + \B_{\k})$ an unbiased estimator, i.e., $f(\input)=\mathbb{E} [G(\input)]$ for any $\input \in \mathbb{R}^{\d}$.
\end{theorem}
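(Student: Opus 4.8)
The plan is to compute $\mathbb{E}[\RFERV(\input)]$ directly. Since the summands of $\RFERV$ are built from the mutually independent pairs $(\freqRV_{\k},\B_{\k})$ and are therefore i.i.d., linearity of expectation gives $\mathbb{E}[\RFERV(\input)] = \D\,\mathbb{E}\big[\Ampl_1\cos(\freqRV_1^T\input+\B_1)\big]$, and the factor $1/\D$ inside $\Ampl_1$ cancels the leading $\D$. Substituting the definition of $\Ampl_1$, it remains to evaluate
\[
\frac{2}{(2\pi)^{\d}}\,\mathbb{E}\!\left[\frac{|\hat\func(\freqRV_1)|}{p_{\freqRV}(\freqRV_1)}\cos\!\big(\angle\hat\func(\freqRV_1)-\B_1\big)\cos\!\big(\freqRV_1^T\input+\B_1\big)\right].
\]

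First I would condition on $\freqRV_1=\freq$ and integrate over $\B_1\sim\mathrm{Unif}[0,2\pi]$. Applying the product-to-sum identity to $\cos(\angle\hat\func(\freq)-\B_1)\cos(\freq^T\input+\B_1)$, the term carrying a $2\B_1$ dependence integrates to zero over the full period, leaving the inner expectation equal to $\tfrac12\cos(\angle\hat\func(\freq)+\freq^T\input)$. Taking then the outer expectation over $\freqRV_1$ and writing $\mathbb{E}_{\freqRV_1}[\,\cdot\,]=\int(\cdot)\,p_{\freqRV}(\freq)\,d\freq$, the density $p_{\freqRV}(\freq)$ in the denominator cancels against this integrating density — the usual importance-sampling cancellation, and the hypothesis $p_{\freqRV}(\freq)>0$ whenever $|\hat\func(\freq)|>0$ is exactly what makes the integrand well defined on the relevant set — so that
\[
\mathbb{E}[\RFERV(\input)] = \frac{1}{(2\pi)^{\d}}\int_{\mathbb R^{\d}}|\hat\func(\freq)|\cos\!\big(\angle\hat\func(\freq)+\freq^T\input\big)\,d\freq = \frac{1}{(2\pi)^{\d}}\int_{\mathbb R^{\d}}\mathrm{Re}\big\{\hat\func(\freq)e^{i\freq^T\input}\big\}\,d\freq.
\]
Since $\func$ is real-valued, Fourier inversion identifies the right-hand side with $\func(\input)$, exactly as in the proof of Theorem~\ref{thm:zeroerror}.

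An even shorter route, which I would probably present instead, is to observe by comparison with~\eqref{eq:c_as_cos} that $\Ampl_{\k}\cos(\freqRV_{\k}^T\input+\B_{\k}) = \frac{2\pi}{\D(2\pi)^{\d}}\,\amplTH(\freqRV_{\k},\B_{\k})\cos(\freqRV_{\k}^T\input+\B_{\k})/p_{\freqRV}(\freqRV_{\k})$; hence $\RFERV(\input)$ is precisely an importance-sampling Monte Carlo estimator of the double integral in~\eqref{eq:zeroerror}, with proposal density $p_{\freqRV}(\freq)\cdot\frac{1}{2\pi}$ on $\mathbb R^{\d}\times[0,2\pi]$, and unbiasedness follows at once from Theorem~\ref{thm:zeroerror}. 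The trigonometric bookkeeping is routine and there is no genuine obstacle; the only point requiring care in either version is the interchange of expectation and the $\freq$-integration, which I would justify with a mild integrability assumption on $\func$ (the $L^2$ hypothesis inherited from Theorem~\ref{thm:zeroerror}, or $\hat\func\in L^1$) via Fubini--Tonelli.
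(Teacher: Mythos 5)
Your proposal is correct, and your preferred second route is essentially the paper's own proof: the paper likewise starts from Theorem~\ref{thm:zeroerror}, rewrites the double integral in~\eqref{eq:zeroerror} as an expectation over $(\freqRV_{\k},\B_{\k})$ with densities $p_{\freqRV}$ and $p_{\B}=1/(2\pi)$ (the importance-sampling cancellation), and uses~\eqref{eq:c_as_cos} to identify the resulting weights with $\Ampl_{\k}$, concluding $f(\input)=\mathbb{E}[\RFERV(\input)]$. Your first, direct route merely unrolls the same computation (product-to-sum average over $\B_1$ plus Fourier inversion), so it is a cosmetic variant rather than a genuinely different argument.
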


\begin{proof}
Using Theorem~\ref{thm:zeroerror}, we have
\begin{align}
\func(\input) & = \frac{1}{(2\pi)^{\d}} \int_{\mathbb R^{\d}}\int_0^{2\pi}\amplTH(\freq,\b)\cos(\freq^T \input + \b)d\b d\freq \nonumber\\
 & = \mathbb E_{\freqRV_1,\B_1}\left[ \frac{1}{(2\pi)^{\d} p_{\B}(\B_1) p_{\freqRV}(\freqRV_1)} \amplTH(\freqRV_1,\B_1)\cos(\freqRV_1^T \input + \B_1) \right]\nonumber\\
 & =  \mathbb E_{\freqRV_{1\ldots \D},\B_{1\ldots \D}}\left[ \sum_{\k=1}^{\D} \frac{2\pi \amplTH(\freqRV_{\k},\B_{\k})}{\D(2\pi)^{\d} p_{\freqRV}(\freqRV_{\k})}  \cos(\freqRV_{\k}^T \input + \B_{\k})\right]\nonumber\\
 &\stackrel{\eqref{eq:c_as_cos}}{=} \mathbb E\left[ \sum_{\k=1}^{\D} \frac{2}{\D(2\pi)^{\d}}  \frac{|\hat \func(\freqRV_{\k})|}{p_{\freqRV}(\freqRV_{\k})}\cos(\angle \hat \func(\freqRV_{\k}) - \B_{\k})
 \right. \nonumber\\ & \left.  \qquad
\vphantom{\sum_{\k=1}^{\D}} \cos(\freqRV_{\k}^T \input + \B_{\k})\right]
= \mathbb E\left[\RFERV(\input)\right].
\end{align}
\end{proof}
These ideal weights enjoy many other nice properties such as infinity norm convergence~\cite{rahimi2008uniform}.
In practice, however, a least squares approach is used for a finite $\D$. 
This is investigated in the next subsection.

\subsection{Convergence of the Least Squares Solution}\label{sec:LSconv}

The ideal weights $\amplTH$ depend on the Fourier transform of the unknown function $\func$ that we wish to approximate. 
Of course, this knowledge is not available in practice.
We therefore assume a finite number of measurement points $\input_1, \ldots, \input_N$ that have been drawn independently from a p.d.f. $p_{\inputRV}$ that is defined on a compact set 
$\inputspace \subseteq \mathbb R^{\d}$, and corresponding measurements $y_1, \ldots, y_N$, with $y_n = \func(\input_n) + \eta_n$, where $\eta_1, \ldots, \eta_N$ have been drawn independently from a zero-mean normal distribution with finite variance $\sigma_\etaRV^2$. The input and noise terms are assumed independent of each other. We determine the weights $\ampl_\k$ by minimizing the squared error
\begin{align}\label{eq:LS}
J_{N}(\mathbf \ampl) & = \sum_{\iter=1}^N \left(y_{\iter} - \sum_{\k=1}^{\D} \ampl_{\k} \cos(\freq_{\k}^T \input_{\iter} + \b_{\k}) \right)^2
+  \reg\sum_{\k=1}^{\D}\ampl_{\k}^2\nonumber\\
 & = ||\mathbf y_N - \A_N \mathbf \ampl||_2^2 +  \reg ||\mathbf \ampl||_2^2.
\end{align}
Here, 
\begin{align}
\mathbf y_N & = \left[\begin{array}{c}y_1 \cdots y_N\end{array}\right]^T, \nonumber\\
\A_N & = \left[\begin{array}{ccc} \cos(\freq_{1}^T\input_1 + \b_1) & \cdots & \cos(\freq_{D}^T\input_1 + \b_D)\\ \vdots & \ddots & \vdots\\ \cos(\freq_{1}^T\input_N + \b_1) & \cdots & \cos(\freq_{D}^T\input_N + \b_D)\end{array}\right],\label{eq:yandAdef}
\end{align}
and $\reg$ is a regularization parameter added to deal with noise, over-fitting and ill-conditioning.

Since the parameters $\freq_{\k}, \b_{\k}$ are drawn from continuous probability distributions, only the weights $\ampl_\k$ need to be determined, making the problem a linear least squares problem. 
The unique minimizer of $J_N$ is
\begin{align} \label{eq:LSclosed}
    \mathbf \amplLS
    & = \left(\A_N^T \A_N + \reg \mathbf{I}_{\D\times\D}\right)^{-1}\A_N^T \mathbf y_N.
\end{align}

The following theorem shows that RFEs whose coefficient vector have been obtained through a least squares fit as in \eqref{eq:LSclosed} can approximate the function $\func$ arbitrarily well.
Similar results were given in~\cite{girosi1992convergence, barron1993universal, rahimi2008uniform, jones1992simple}, but we emphasize that these convergence results did concern RFEs employing the ideal coefficient vector given earlier in Theorem~\ref{thm:unbiased} that is unknown in practice. 
Our theorem, in contrast, concerns the practically relevant case where the coefficient vector has been obtained through a least-squares fit to the data.
\begin{theorem}\label{thm:LSconv}
	The difference between the function $\func$ and the RFE trained with linear least squares can become arbitrarily small if enough measurements and basis functions are used.
    More precisely, suppose that $\func \in L^2 \cap L^{\infty}$
     and that 
$\sup_{\freq \in \mathbb R^{\D},\b\in[0,2\pi]} \left|\frac{\amplTH(\freq,\b)}{p_{\freqRV}(\freq) p_{\B}(\b)}\right| < \infty$.
     Then, for every $\epsilon>0$ and $\delta>0$, there exist constants $N_0$ and $\D_0$ such that
     \begin{align}
     	\int_{\inputspace} \left(\func(\input) -  \sum_{\k=1}^{\D} \amplLSRV_{\k} \cos(\freqRV_{\k}^T \input + \B_{\k}) \right)^2 p_{\inputRV}(\input) d\input < \epsilon
	\end{align}
for all $N\geq N_0$, $\D\geq \D_0$, $0<\reg \leq N\Lambda$      
with probability at least $1-\delta$. 
Here, 
$\amplLSRV_{\k}$ is the $\k$-th element of the random vector corresponding to the weight vector given in~\eqref{eq:LSclosed},
and $\Lambda\ge 0$ is the solution to
\begin{align}\label{eq:LambdaEQ}
& \left|\left|	\left(  \A_N^T \A_N + N \Lambda \  \mathbf{I}_{\D\times\D}\right)^{-1}\A_N^T \mathbf y_N\right|\right|_2^2 = 
\nonumber\\
 & \sum_{\k=1}^{\D}\left(\frac{\amplTH(\freq_{\k},\b_{\k})}{(2\pi)^{\d}\D p_{\freqRV}(\freq_{\k})p_{\B}(\b_{\k})}\right)^2.
\end{align}
\end{theorem}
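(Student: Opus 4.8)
The plan is to control the empirical squared error of the least-squares RFE by that of the finite-$\D$ ``ideal'' RFE whose coefficients are the numbers appearing on the right-hand side of~\eqref{eq:LambdaEQ}, and then to lift this bound to the population $L^2(p_{\inputRV})$ norm. Write $\ampl^\ast_\k := \amplTH(\freq_\k,\b_\k)/\big((2\pi)^{\d}\D\, p_{\freqRV}(\freq_\k)\, p_{\B}(\b_\k)\big)$, set $\mathbf\ampl^\ast=(\ampl^\ast_1,\dots,\ampl^\ast_\D)^T$, and let $\RFERV(\input)=\sum_{\k=1}^{\D}\ampl^\ast_\k\cos(\freq_\k^T\input+\b_\k)$. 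Since $p_{\B}\equiv 1/(2\pi)$, comparison of $\ampl^\ast_\k$ with~\eqref{eq:c_as_cos} shows that $\ampl^\ast_\k$ is exactly the coefficient $\Ampl_\k$ of Theorem~\ref{thm:unbiased}, so $\RFERV$ is the stochastic RFE analysed there, and the right-hand side of~\eqref{eq:LambdaEQ} is precisely $\|\mathbf\ampl^\ast\|_2^2$. The hypothesis $M:=\sup_{\freq,\b}\big|\amplTH(\freq,\b)/(p_{\freqRV}(\freq)p_{\B}(\b))\big|<\infty$ gives $|\ampl^\ast_\k|\le M/((2\pi)^{\d}\D)$, hence $\|\RFERV\|_\infty\le M/(2\pi)^{\d}$ and $\|\mathbf\ampl^\ast\|_2^2\le M^2/((2\pi)^{2\d}\D)$, both controlled uniformly in $\D$; in particular~\eqref{eq:LambdaEQ} has a solution $\Lambda\ge 0$ as soon as $\D$ is large enough that this norm does not exceed the squared norm of the unregularised least-squares fit. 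Throughout set $\|h\|_{p_{\inputRV}}^2:=\int_{\inputspace}h(\input)^2 p_{\inputRV}(\input)\,d\input$, and let $g_{\mathrm{LS}}(\input)=\sum_{\k}(\mathbf\amplLS)_\k\cos(\freq_\k^T\input+\b_\k)$ be the RFE built from the least-squares weights~\eqref{eq:LSclosed} (which share the realised frequencies and phases $\freq_\k,\b_\k$), so that $(\A_N\mathbf\amplLS)_n=g_{\mathrm{LS}}(\input_n)$.

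First I would show that the reference RFE is good. By Theorem~\ref{thm:unbiased}, $\RFERV$ is a pointwise unbiased estimator of $\func$; since its $\D$ summands are i.i.d.\ and bounded in absolute value by $M/((2\pi)^{\d}\D)$, its pointwise variance is of order $1/\D$, so $\mathbb E_{\freqRV,\B}[\|\func-\RFERV\|_{p_{\inputRV}}^2]\le M^2/((2\pi)^{2\d}\D)$. By Markov's inequality — this is the variance computation referred to in Appendix~\ref{app:pw} — there is a $\D_0$ with $\|\func-\RFERV\|_{p_{\inputRV}}^2<\epsilon/4$ for all $\D\ge\D_0$, with probability at least $1-\delta/2$ over the draw of $\freq_{1:\D},\b_{1:\D}$; fix such a good realisation, after which $\RFERV$ and $\mathbf\ampl^\ast$ are deterministic.

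Next comes the step that uses~\eqref{eq:LambdaEQ}. Evaluating~\eqref{eq:LSclosed} at $\reg=N\Lambda$ gives $\mathbf\amplLS=(\A_N^T\A_N+N\Lambda\,\mathbf I)^{-1}\A_N^T\mathbf y_N$, the minimiser of $J_N$ with penalty $N\Lambda$; since~\eqref{eq:LambdaEQ} states precisely that $\|\mathbf\amplLS\|_2^2=\|\mathbf\ampl^\ast\|_2^2$, the penalty terms in $J_N(\mathbf\amplLS)\le J_N(\mathbf\ampl^\ast)$ cancel and $\|\mathbf y_N-\A_N\mathbf\amplLS\|_2^2\le\|\mathbf y_N-\A_N\mathbf\ampl^\ast\|_2^2$. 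As $\reg\mapsto\|\mathbf y_N-\A_N\mathbf\amplLS\|_2^2$ is nondecreasing in $\reg$ (compare the optimality inequalities at two values of $\reg$), the same inequality holds for every $\reg\in(0,N\Lambda]$, so after dividing by $N$ the empirical risk of $g_{\mathrm{LS}}$ never exceeds that of $\RFERV$; and a law of large numbers, conditional on the fixed $\freq_{1:\D},\b_{1:\D}$, gives $\tfrac1N\|\mathbf y_N-\A_N\mathbf\ampl^\ast\|_2^2\to\sigma_{\etaRV}^2+\|\func-\RFERV\|_{p_{\inputRV}}^2$. It remains to lift the empirical bound on $g_{\mathrm{LS}}$ to the population norm. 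With $\mathbf f_N=(\func(\input_1),\dots,\func(\input_N))^T$ and $\boldsymbol\eta=(\eta_1,\dots,\eta_N)^T$ one has $\tfrac1N\|\mathbf f_N-\A_N\mathbf\amplLS\|_2^2=\tfrac1N\|\mathbf y_N-\A_N\mathbf\amplLS\|_2^2-\tfrac2N\boldsymbol\eta^T\mathbf f_N-\tfrac1N\|\boldsymbol\eta\|_2^2+\tfrac2N\boldsymbol\eta^T\A_N\mathbf\amplLS$; for $N$ large, with probability at least $1-\delta/2$, one has $\tfrac1N\|\boldsymbol\eta\|_2^2\approx\sigma_{\etaRV}^2$, $\tfrac1N\boldsymbol\eta^T\mathbf f_N\approx 0$, and $\big|\tfrac2N\boldsymbol\eta^T\A_N\mathbf\amplLS\big|\le\tfrac2N\|\mathbf\amplLS\|_2\|\A_N^T\boldsymbol\eta\|_2$ is of order $\|\mathbf\amplLS\|_2\sqrt{\D/N}$. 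Combining these shows that $\tfrac1N\sum_n(\func(\input_n)-g_{\mathrm{LS}}(\input_n))^2$ is at most $\|\func-\RFERV\|_{p_{\inputRV}}^2$ plus a slack that can be made arbitrarily small, and a uniform law of large numbers over $\{\mathbf\ampl:\|\mathbf\ampl\|_2\le C\}$ converts this into $\|\func-g_{\mathrm{LS}}\|_{p_{\inputRV}}^2<\epsilon$. Taking every tolerance proportional to $\epsilon$, then $\D_0$, then $N_0$ large enough (depending on $\D_0$), together with a union bound over the two bad events, finishes the proof.

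The main obstacle is this last, empirical-to-population, step for $g_{\mathrm{LS}}$. One has to (i) bound $\|\mathbf\amplLS\|_2$, which for $\reg\downarrow 0$ is as large as the norm of the plain least-squares solution and hence is finite only once the random matrix $\tfrac1N\A_N^T\A_N$ is well conditioned — it is this that forces $N_0$ to be large relative to $\D$ — and (ii) handle the cross-term $\tfrac2N\boldsymbol\eta^T\A_N\mathbf\amplLS$, in which $\mathbf\amplLS$ depends on $\boldsymbol\eta$; the estimate of order $\|\mathbf\amplLS\|_2\sqrt{\D/N}$ quantifies how large $N_0$ must be. By contrast, the approximation bound for $\RFERV$ and the cancellation of the penalty terms through~\eqref{eq:LambdaEQ} are routine once the notation is in place.
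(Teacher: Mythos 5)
Your strategy is viable and genuinely different from the paper's proof, and most of its individual ingredients are correct: the identification of the right-hand side of~\eqref{eq:LambdaEQ} with $\|\mathbf{\ampl}^\ast\|_2^2$ for the sampled ideal weights of Theorem~\ref{thm:unbiased}, the penalty cancellation at $\reg=N\Lambda$ giving $\|\mathbf y_N-\A_N\mathbf\amplLS\|_2^2\le\|\mathbf y_N-\A_N\mathbf{\ampl}^\ast\|_2^2$, the monotonicity of the ridge residual in $\reg$ extending this to all $\reg\in(0,N\Lambda]$, and the elementary variance-plus-Markov bound showing the sampled-ideal RFE approximates $\func$ in $L^2(p_{\inputRV})$ with high probability. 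The paper organizes things differently: it rewrites the ridge solution as the \emph{constrained} (unregularized) least-squares minimizer over the ball $C_m$ whose radius $m$ is the norm of the ridge solution itself, uses $\reg\le N\Lambda$ only to guarantee $m\ge M=\|\mathbf{\ampl}^\ast\|_2$ so that the ball contains a good approximant (supplied by~\cite[Thm.~3.2]{rahimi2008uniform}, where you use Markov instead -- fine here, since no rate is needed), and then gets the population statement from Jennrich's uniform SLLN~\cite{jennrich1969asymptotic} combined with Vapnik's key theorem~\cite{vapnik1999overview} on consistency of empirical risk minimization over $C_m$, with the noise handled inside the uniform convergence of the noisy risk $\tilde E^2$ and removed at the end by subtracting $\sigma_{\etaRV}^2$.

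The step you have left open is exactly the one the paper's device is designed to avoid, and as it stands it is a real gap in your plan: your empirical-to-population transfer needs a bound on $\|\mathbf\amplLS\|_2$, both to control the data-dependent cross term $\tfrac2N\boldsymbol\eta^T\A_N\mathbf\amplLS$ and to fix, before seeing the data, the radius of the ball over which your uniform law of large numbers is applied. Since the theorem allows every $\reg\in(0,N\Lambda]$, the ridge norm can be as large as that of the unregularized solution, which is uncontrolled when $\A_N^T\A_N$ is ill-conditioned (e.g.\ $N<\D$); your proposed fix via well-conditioning of $\tfrac1N\A_N^T\A_N$ hinges on the smallest eigenvalue of the limiting Gram matrix, a random quantity determined by the drawn $\freq_\k,\b_\k$, so turning it into non-random constants $N_0$ and a non-random ball radius requires an additional uniformization over the frequency draw that you do not supply. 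In the paper's route no such bound is needed: the ridge solution lies in $C_m$ by construction, the condition $m\ge M$ is all that the approximation step requires, and the $\boldsymbol\eta$-dependent cross term never has to be estimated separately because it is absorbed into the uniform convergence of the noisy empirical risk. (To be fair, the paper also treats the data-dependent radius $m$ as a constant, so some care with quantifiers is glossed over there too.) To complete your version you would either have to carry out the eigenvalue/uniformization argument in detail, or adopt the paper's constrained-least-squares reformulation, at which point the two proofs essentially merge.
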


The proof of this theorem is given in Appendix~\ref{ap:proofLStheorem}. In Section~\ref{sec:Ruleofthumbreg} we show how to obtain $\Lambda$ in practice.

\section{Online Optimization Algorithm}\label{sec:algstep1}
In this section, we will investigate the DONE algorithm, which locates a minimum of an unknown function $f$ based on noisy evaluations of this function. 
Each evaluation, or \emph{measurement}, is used to update a RFE model of the unknown function, based on which the next measurement point is determined.
Updating this model has a constant computation time of order $O(\D^2)$ per iteration, with $\D$ being the number of basis functions.
We emphasize that this is in stark contrast to Bayesian optimization algorithms, where the computational cost of adding a new measurement increases with the total number of measurements so far.
We also remark that the DONE algorithm operates \emph{online} because the model is updated after each measurement. The advantage over offline methods, in which first all measurements are taken and only then processed, is that the number of required measurements is usually lower as measurement points are chosen adaptively.

\subsection{Recursive Least Squares Approach for the Weights}\label{sec:RLSexplain}
In the online scenario, a new measurement $y_{\iter}$ taken at the point $\input_{\iter}$ becomes available at each iteration $n=1,2, \dots$ These are used to update the RFE. Let $\a_{\iter} = [\cos(\freq_1^T \input_{\iter} + \b_1) \cdots \cos(\freq_{\D}^T \input_{\iter} + \b_{\D})]$, then we aim to find the vector of RFE weights by minimizing the regularized mean square error
\begin{align}\label{eq:objLS2}
J_{{\iter}}(\mathbf \ampl) & = \sum_{i=1}^{\iter} \left(y_i - \a_i \mathbf \ampl \right)^2
+ \reg ||\mathbf \ampl||_2^2.
\end{align}
Let $\mathbf \ampl_{\iter}$ be the minimum of $J_n$,
\begin{align}\label{eq:minJ}
\mathbf \ampl_{\iter} = \argmin_{\mathbf \ampl} J_{\iter}(\mathbf \ampl).
\end{align}
Assuming we have found $\mathbf \ampl_{\iter}$, we would like to use this information to find $\mathbf \ampl_{{\iter}+1}$ without solving~\eqref{eq:minJ} again.
The recursive least squares algorithm
is a computationally efficient method that determines $\mathbf\ampl_{\iter+1}$ from $\mathbf\ampl_{\iter}$ as follows~\cite[Sec. 21]{sayed1998recursive}:
\begin{align}
\gamma_{\iter} & = 1/(1+\a_{\iter} \mathbf{P}_{{\iter}-1} \a_{\iter}^T),\label{eq:rulegamma}\\
\mathbf g_{\iter} & = \gamma_{\iter} \mathbf{P}_{{\iter}-1} \a_{\iter}^T,\label{eq:ruleg}\\
\mathbf \ampl_{\iter} & = \mathbf \ampl_{{\iter}-1} + \mathbf g_{\iter} (y_{\iter} - \a_{\iter} \mathbf \ampl_{{\iter}-1}),\label{eq:rulec}\\
\mathbf{P}_{\iter} & = \mathbf{P}_{{\iter}-1} - \mathbf g_{\iter} \mathbf g_{\iter}^T/\gamma_{\iter},\label{eq:ruleP}
\end{align}
with initialization $\mathbf \ampl_0 = 0$, $\mathbf{P}_0 = \reg^{-1} \mathbf I_{\D \times \D}$. 

We implemented a square-root version of the above algorithm, also known as the inverse QR algorithm~\cite[Sec. 21]{sayed1998recursive}, which is known to be especially numerically reliable. Instead of performing the update rules~\eqref{eq:rulegamma}-\eqref{eq:ruleP} explicitly, we find a rotation matrix $\mathbf \Theta_{\iter}$ that lower triangularizes the upper triangular matrix in Eq.~\eqref{eq:QRtriang} below and generates a post-array with positive diagonal entries:
\begin{align}
\label{eq:QRtriang}
\left[\begin{array}{cc} 1 & \a_{\iter} \mathbf{P}^{1/2}_{{\iter}-1}\\ 
\mathbf 0 & \mathbf{P}^{1/2}_{{\iter}-1}\end{array}\right]
\mathbf{\Theta}_{\iter} =
\left[\begin{array}{cc} \gamma^{-1/2}_{\iter} & \mathbf 0\\ 
\mathbf{g}_{\iter} \gamma^{-1/2}_{\iter} & \mathbf{P}^{1/2}_{\iter} \end{array}\right].
\end{align} 
The rotation matrix $\mathbf \Theta_{\iter}$ can be found by performing a QR decomposition of the transpose of the matrix on the left hand side of~\eqref{eq:QRtriang}, or by the procedure explained in~\cite[Sec. 21]{sayed1998recursive}. 
The computational complexity of this update is $O(\D^2)$ per iteration.

\subsection{DONE Algorithm}\label{sec:DONEexplanation}
We now explain the different steps of the DONE algorithm.
The DONE algorithm is used to iteratively find a minimum of a function $\func \in L^2
$ on a compact set $\inputspace \subseteq \mathbb R^{\d}$ by updating a RFE
$\RFE(\input) = \sum_{\k=1}^\D \ampl_{\k} \cos(\freq_{\k}^T \input + \b_{\k})$
at each new measurement, and using this RFE as a surrogate of $\func$ for optimization.
It is assumed that the function $\func$ is unknown and only measurements perturbed by noise can be obtained: $y_{\iter} = \func(\input_{\iter}) + \eta_{\iter}$.
The algorithm consists of four steps that are repeated for each new measurement: \textbf{1)} take a new measurement, \textbf{2)}  update the RFE, \textbf{3)} find a minimum of the RFE, \textbf{4)} choose a new measurement point. 
We now explain each step in more detail.

\bigskip

\textbf{Initialization}

Before running the algorithm, an initial starting point $\input_1 \in \inputspace$ and the number of basis functions $\D$ have to be chosen.
The parameters $\freq_\k$ and $\b_\k$ of the RFE expansion are drawn from continuous probability distributions as defined in Definition~\ref{def:RFE}.
The p.d.f. $p_\freqRV$ and the regularization parameter $\reg$ have to be chosen a priori as well.
Practical ways for choosing the hyper-parameters will be discussed later in Sect.~\ref{sec:hyp}.
These hyper-parameters stay fixed over the whole duration of the algorithm.
Let $\mathbf{P}^{1/2}_0 = \reg^{-1/2} \mathbf I_{D \times D}$, and $\iter = 1$.

\bigskip
\textbf{Step 1: New measurement}

Unlike in Section~\ref{sec:LSconv}, it is assumed 
that measurements are taken in a recursive fashion.
At the start of iteration $n$, a new measurement $y_{\iter} = \func(\input_{\iter})+\eta_{\iter}$ is taken at the point $\input_{\iter}$.

\bigskip
\textbf{Step 2: Update the RFE}

As explained in Section~\ref{sec:RLSexplain}, we update the RFE model $\RFE(\input) = \sum_{\k=1}^\D \ampl_{\k} \cos(\freq_{\k}^T \input + \b_{\k})$ based on the new measurement from Step 1 by using the inverse QR algorithm given in ~\eqref{eq:rulegamma}-\eqref{eq:ruleP}.
Only the weights $\ampl_{k}$ are updated. 
The parameters $\freq_{\k}$ and $\b_{\k}$ stay fixed through-out the whole algorithm.

\bigskip
\textbf{Step 3: Optimization on the RFE}\label{sec:algstep2}

After updating the RFE, 
an iterative optimization algorithm is used to find a (possibly local) minimum $\hat{\input}_{\iter}$ of the RFE.
All derivatives of the RFE can easily be calculated. 
Using an analytic expression of the Jacobian will increase the performance of the optimization method used in this step, while not requiring extra measurements of $\func$ as in the finite difference method. 
For functions that are costly to evaluate, this is a big advantage.
The method used in the proposed algorithm is an L-BFGS method~\cite{nocedal1980updating, nocedal2006numerical}.
Other optimization methods can also be used.
%
The initial guess for the optimization 
is the projection of the current measurement point plus a random perturbation:
\begin{align}
	\input_{init} = P_{\mathcal X}(\input_{\iter} + {\perturb}_{\iter}),
\end{align}
where $P_{\mathcal X}$ is the projection onto $\mathcal X$.
The random perturbation prevents the optimization algorithm from starting exactly in the point where the model was trained. Increasing its value will increase the exploration capabilities of the DONE algorithm but might slow down convergence.
In the proposed algorithm, ${\perturb_{\iter}}$ is chosen to be white Gaussian noise. 

\bigskip
\textbf{Step 4: Choose a new measurement point}\label{sec:algstep3}

The minimum found in the previous step is used to update the RFE again. 
A perturbation is added to the current minimum
to avoid the algorithm getting trapped unnecessarily in insignificant local minima or saddle points~\cite{pogu1994global}:
\begin{align}
	\input_{\iter+1} = P_{\mathcal X}(\hat{\input}_{\iter} + \xi_{\iter}).
\end{align}
The random perturbations can be seen as an exploration strategy and are again chosen to be white Gaussian noise. 
Increasing their variance $\sigma_{\xi}$ increases the exploration capabilities of the DONE algorithm but might slow down convergence. In practice, we typically
use the same distribution for $\xi$ and $\perturb$.
Finally, the algorithm increases $\iter$ and returns to Step $1$. 

\bigskip

The full algorithm is shown below in Algorithm~\ref{alg:DONE} for the case $\inputspace = [lb, ub]^{\d}$.
\begin{algorithm}
\caption{DONE Algorithm}\label{alg:DONE}
\begin{algorithmic}[1]
\Procedure{DONE}{$\func, \input_1, N, lb, ub, \D, \reg, \sigma_{\perturb},\sigma_{\xi}$}
\State Draw $\freq_1 \ldots \freq_\D$ from $p_{\freqRV}$ independently.
\State Draw $\b_1 \ldots \b_\D $ from $\operatorname{Uniform}(0,2\pi)$ independently.
\State $\mathbf{P}^{1/2}_0 = \reg^{-1/2} \mathbf I_{D \times D}$
\State $\mathbf \ampl_0 = [0 \ldots 0]^T$
\State $\hat{\input}_0 = \input_1$
\For{$\iter = 1, 2, 3, \ldots, N$ 
}
\State ${\a}_{\iter} = [\cos(\freq_1^T \input_{\iter} + \b_1) \cdots \cos(\freq_{\D}^T \input_{\iter} + \b_{\D})]$
\State $y_{\iter} = \func(\input_{\iter}) + \eta_{\iter}$
\State $\RFE(\input) = \operatorname{updateRFE}(\mathbf \ampl_{\iter-1}, \mathbf P_{\iter-1}^{1/2}, {\a}_{\iter},  y_{\iter})$
\State Draw $\perturb_{\iter}$ from $\mathcal N(0,\sigma^2_{\perturb} \mathbf I_{\d \times \d})$.
\State $\input_{init} = \max(\min(\input_{\iter}+{\perturb}_{\iter}, ub), lb)$
\State $[\hat{\input}_{\iter},\hat{\RFE}_{\iter}] = \operatorname{L-BFGS}(\RFE(\input), \input_{init}, lb, ub)$
\State Draw $\xi_{\iter}$ from $\mathcal N(0,\sigma^2_{\xi} \mathbf I_{\d \times \d})$.
\State $\input_{\iter+1} = \max(\min(\hat{\input}_{\iter}+\xi_{\iter},ub),lb)$ 
\EndFor\label{DoneendFor}
\State \textbf{return} $\hat{\input}_{\iter}$
\EndProcedure
\end{algorithmic}
\end{algorithm}

\begin{algorithm}
\caption{updateRFE}\label{alg:FR}
\begin{algorithmic}[1]
\Procedure{updateRFE}{$\mathbf \ampl_{\iter-1}, \mathbf P_{\iter-1}^{1/2}, {\a}_{\iter},  y_{\iter}$}
\State Retrieve $\mathbf g_{\iter} \gamma_{\iter}^{-1/2}$, $\gamma_{\iter}^{-1/2}$ and $\mathbf{P}^{1/2}_{\iter}$ from~\eqref{eq:QRtriang}
\State $\mathbf\ampl_{\iter} = \mathbf\ampl_{\iter-1} + {\mathbf{g}_{\iter}} ( y_{\iter} - {\a}_{\iter} \mathbf\ampl_{\iter-1})$
\State $\RFE(\input) = [\cos(\freq_1^T \input + \b_1) \cdots \cos(\freq_{\D}^T \input + \b_{\D})]\mathbf \ampl_{\iter}$
\State \textbf{return} $\RFE(\input)$
\EndProcedure
\end{algorithmic}
\end{algorithm}

\section{Choice of Hyper-parameters}\label{sec:hyp}

In this section, we will analyze the influence of the hyper-parameters of the DONE algorithm and, based on these results, provide practical ways of choosing them. 
The performance of DONE depends on the following hyper-parameters:
\begin{itemize}
	\item number of basis functions $\D$,
    \item p.d.f. $p_{\freqRV}$,
    \item regularization parameter $\reg$,
    \item exploration parameters $\sigma_{\perturb}$ and $\sigma_{\xi}$.
\end{itemize}

The influence of $\D$ is straight-forward: increasing $\D$ will lead to a better performance (a better RFE fit) of the DONE algorithm at the cost of more computation time. 
Hence, $D$ should be chosen high enough to get a good approximation, but not too high to avoid unnecessarily high computation times. It should be noted that $D$ does not need to be very precise. 
Over-fitting should not be a concern for this parameter since we make use of regularization.
The exploration parameters determine the trade-off between exploration and exploitation, similar to the use of the acquisition function in Bayesian optimization~\cite{brochu2010tutorial, snoek2012practical}. The parameter $\sigma_{\perturb}$ influences the exploration of the RFE surrogate in Step 3 of the DONE algorithm, while $\sigma_{\xi}$ determines exploration of the original function. Assuming both to be close to each other, $\sigma_{\perturb}$ and $\sigma_{\xi}$  are usually chosen to be equal. If information about local optima of the RFE surrogate or of the original function is available, this could be used to determine good values for these hyper-parameters. Alternatively, similar to Bayesian optimization the expected improvement could be used for that purpose, but this remains for future work.
The focus of this section will be on choosing $p_{\freqRV}$ and $\reg$.

\subsection{Probability Distribution of Frequencies}\label{sec:hypfreq}

Recall the parameters $\freq_{\k}$ and $\b_{\k}$  from Definition~\ref{def:RFE}, which are obtained by sampling independently from the continuous probability distributions $p_{\freqRV}$ and $p_{\B} = \mathrm{Uniform(0,2\pi)}$, respectively.
In the following, we will investigate the first and second order moments of the RFE and try to find a distribution $p_{\freqRV}$ that minimizes the variance of the RFE.
 
Unfortunately, as shown in Theorem~\ref{thm:minvar} in Appendix~\ref{app:pw}, it turns out that the optimal p.d.f. is
\begin{align}\label{eq:optpdfB}
    p_{\freqRV}^*(\freq) & = \frac{|\hat \func(\freq)|\sqrt{\cos(2\angle \hat\func(\freq)+2\freq^T\input) + 2}}{\int_{\mathbb R^{\d}} |\hat \func(\tilde{\freq})| \sqrt{\cos(2\angle \hat\func(\tilde{\freq})+2\tilde{\freq}^T\input) + 2} d\tilde{\freq}}.
\end{align}
This distribution depends on the input $\input$ and both the phase and magnitude
of the Fourier transform of $\func$. 
But if both $|\hat \func|$ and $\angle \hat \func$ were known, then the function $\func$ itself would be known, and standard optimization algorithms could be used directly. 
Furthermore, we would like to use a p.d.f. for $\freq_{\k}$ that does not depend on the input $\input$, since the $\freq_{\k}$ parameters are chosen independently from the input in the initialization step of the algorithm.

In calibration problems, the objective function $\func$ suffers from an unknown offset, $\func(\input) = \tilde \func(\input + \Delta)$.
This unknown offset does not change the magnitude in the Fourier domain, but it does change the phase.
Since the phase is thus unknown, we choose a uniform distribution for $p_{\B}$ such that $\b_{\k}\in[0,2\pi]$.
However, the magnitude $|\hat \func|$ can be measured in this case. Section~\ref{sec:OCT} describes an example of such a problem. 
We will now derive a way to choose $p_{\freqRV}$ for calibration problems.

In order to get a close to optimal p.d.f. for $\freq_{\k}$ that is independent of the input $\input$ and of the phase $\angle \hat \func$ of the Fourier transform of $\func$, we look at a complex generalization of the RFE. In this complex problem, it turns out we can circumvent the disadvantages mentioned above by using a p.d.f. that depends only on $|\hat \func|$.

\begin{theorem}\label{thm:minvarcomplex}
Let $\tilde \RFERV(\input) = \sum_{\k=1}^\D \tilde{\Ampl}_{\k} e^{i \freqRV_{\k}^T \input + \B_{\k}}$, with $\freqRV_{\k}$ being i.i.d. random vectors with a continuous p.d.f. $\tilde p_{\freqRV}$ over $\mathbb{R}^d$
that satisfies $\tilde p_{\freqRV_{\k}}(\freq)>0$ if $|\hat \func(\freq)| > 0$, and $\B_{\k}$ being random variables with uniform distribution from $[0, 2\pi]$.
Then $\tilde \RFERV(\input)$ is an unbiased estimator of $\func(\input)$ for all $\input \in \mathbb{R}^{\d}$ if 
\begin{align}\tilde \Ampl_{\k} = \frac{\hat \func(\freqRV_{\k}) e^{-i\B_{\k}}}{D(2\pi)^{\d}\tilde p_{\freqRV}(\freqRV_{\k})}.\end{align}
For this choice of $\tilde \Ampl_{\k}$, the variance of $\tilde \RFERV(\input)$ is minimal if \begin{align}\tilde p_{\freqRV}(\freq) = \frac{|\hat \func(\freq)|}{\int_{\mathbb R^{\d}} |\hat \func(\tilde \freq)|d\tilde \freq},\label{eq:optpdfwComplex}\end{align}
giving a variance of
\begin{align}\label{eq:minvarcomplex}
\mathrm{Var}[\tilde \RFERV(\input)]
    & = \frac{1}{\D (2\pi)^{2\d}} \left(\int_{\mathbb R^{\d}} |\hat \func(\freq)| d\freq \right)^2- \func(\input)^2.\nonumber\\
\end{align}
\end{theorem}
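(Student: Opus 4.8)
The plan is to follow the same template used for the real-valued RFE in Theorem~\ref{thm:unbiased}: (i) establish unbiasedness via Fourier inversion, (ii) compute the variance by splitting a double sum into diagonal and off-diagonal contributions, and (iii) minimize the resulting functional of $\tilde p_{\freqRV}$ by Cauchy--Schwarz.

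First I would substitute the proposed weights into $\tilde\RFERV$. Since $\tilde\Ampl_\k$ carries the factor $e^{-i\B_\k}$, the phase $e^{i\B_\k}$ cancels and each summand reduces to $u_\k := \frac{\hat\func(\freqRV_\k)}{\D(2\pi)^\d \tilde p_{\freqRV}(\freqRV_\k)}e^{i\freqRV_\k^T\input}$, which depends on $\freqRV_\k$ only; hence the $u_\k$ are i.i.d. Taking the expectation of one term, the density $\tilde p_{\freqRV}$ cancels — here the hypothesis $\tilde p_{\freqRV}(\freq)>0$ whenever $|\hat\func(\freq)|>0$ guarantees the integrand is well defined — and Fourier inversion gives $\mathbb E[u_\k] = \func(\input)/\D$, so $\mathbb E[\tilde\RFERV(\input)] = \func(\input)$. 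This is the same argument as in the proof of Theorem~\ref{thm:unbiased}.

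Next I would write $\mathrm{Var}[\tilde\RFERV(\input)] = \mathbb E[|\tilde\RFERV(\input)|^2] - |\mathbb E[\tilde\RFERV(\input)]|^2$ and expand $|\sum_\k u_\k|^2 = \sum_{\k,l} u_\k \overline{u_l}$. For the $\D$ diagonal terms, $\mathbb E[|u_\k|^2] = \frac{1}{\D^2(2\pi)^{2\d}}\int_{\mathbb R^\d}\frac{|\hat\func(\freq)|^2}{\tilde p_{\freqRV}(\freq)}\,d\freq$; for the $\D(\D-1)$ off-diagonal terms, independence factorizes the expectation as $\mathbb E[u_\k]\,\overline{\mathbb E[u_l]} = \func(\input)^2/\D^2$, which is real because $\func$ is real-valued. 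Collecting the terms leaves a variance of the shape $\frac{1}{\D(2\pi)^{2\d}}\int_{\mathbb R^\d}\frac{|\hat\func(\freq)|^2}{\tilde p_{\freqRV}(\freq)}\,d\freq$ minus a contribution involving $\func(\input)^2$, and only the integral depends on $\tilde p_{\freqRV}$.

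Finally, to minimize $\int_{\mathbb R^\d}\frac{|\hat\func(\freq)|^2}{\tilde p_{\freqRV}(\freq)}\,d\freq$ over probability densities, I would apply Cauchy--Schwarz: $\bigl(\int|\hat\func(\freq)|\,d\freq\bigr)^2 = \bigl(\int \tfrac{|\hat\func(\freq)|}{\sqrt{\tilde p_{\freqRV}(\freq)}}\sqrt{\tilde p_{\freqRV}(\freq)}\,d\freq\bigr)^2 \le \int\frac{|\hat\func(\freq)|^2}{\tilde p_{\freqRV}(\freq)}\,d\freq \cdot \int \tilde p_{\freqRV}(\freq)\,d\freq$, with equality exactly when $\tilde p_{\freqRV} \propto |\hat\func|$, i.e.\ when $\tilde p_{\freqRV}$ is~\eqref{eq:optpdfwComplex}; substituting this optimum into the variance expression gives~\eqref{eq:minvarcomplex}. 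I would also record the integrability conditions that keep everything finite — in particular $\int|\hat\func|<\infty$ and finiteness of the diagonal integral, which are in the spirit of the $L^2\cap L^\infty$-type assumptions used in Theorem~\ref{thm:LSconv}, since $\hat\func\in L^2$ and boundedness of $\hat\func/\tilde p_{\freqRV}$ controls the remaining term. The main obstacle here is not a single deep step but the complex-valued bookkeeping: cleanly separating diagonal from cross terms and carrying the normalization constants all the way to the stated formula; the optimization step is just the classical optimal-importance-sampling argument.
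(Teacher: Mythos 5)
Your proposal follows essentially the same route as the paper: unbiasedness is obtained exactly as in the paper's proof, by cancelling the phase $e^{-i\B_{\k}}$ against $e^{i(\freqRV_{\k}^T\input+\B_{\k})}$ and applying Fourier inversion, and the variance minimization you carry out via Cauchy--Schwarz is precisely the classical optimal-importance-sampling argument that the paper merely invokes by citing Rubinstein's Theorem 4.3.1 (and carries out via Jensen's inequality for the real-valued analogue, Theorem~\ref{thm:minvar} in Appendix~\ref{app:pw}); in that sense your write-up is more explicit than the paper's, which only proves unbiasedness in detail. One point of care: if you push your own diagonal/off-diagonal bookkeeping to the end, the $\D(\D-1)$ cross terms contribute $\frac{\D-1}{\D}\func(\input)^2$ to $\mathbb E\bigl[|\tilde\RFERV(\input)|^2\bigr]$, so your computation yields $\mathrm{Var}[\tilde\RFERV(\input)]=\frac{1}{\D(2\pi)^{2\d}}\int_{\mathbb R^{\d}}|\hat\func(\freq)|^2/\tilde p_{\freqRV}(\freq)\,d\freq-\func(\input)^2/\D$, i.e.\ the subtracted term carries a factor $1/\D$; the claim that this reproduces \eqref{eq:minvarcomplex} verbatim therefore does not quite follow from the steps you wrote (the stated formula, like the analogous computation of $\mathbb E[\RFERV(\input)^2]$ as $\D$ times a single-term second moment in Appendix~\ref{app:pw}, effectively drops these cross terms). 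This does not affect the substance of your argument: only the integral term depends on $\tilde p_{\freqRV}$, so the optimal density \eqref{eq:optpdfwComplex} and the leading term of the minimal variance come out exactly as you derive them, and the equality condition in Cauchy--Schwarz correctly identifies $\tilde p_{\freqRV}\propto|\hat\func|$.
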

\begin{proof}
The unbiasedness follows directly from the  Fourier inversion theorem,
\begin{align}
	\mathbb E\left[ \tilde \RFERV(\input)\right] & =  \sum_{\k=1}^\D \int_{\mathbb R^{\d}} \int_0^{2\pi} \frac{\hat \func(\freq_{\k})e^{-i\b_{\k}}e^{i \freq_{\k}^T \input + \b_{\k}}}{D(2\pi)^{\d}\tilde p_{\freqRV}(\freq_{\k})2\pi}d\b_{\k}\tilde p_{\freqRV}(\freq_{\k}) d\freq_{\k} \nonumber\\
    & = \D \int_{\mathbb R^{\d}} \int_0^{2\pi}\frac{\hat \func(\freq)e^{-i\b}}{D(2\pi)^{\d}\tilde p_{\freqRV}(\freq)} e^{i \freq^T \input + \b} \frac{1}{2\pi}d\b\tilde p_{\freqRV}(\freq) d\freq\nonumber\\
    & = \D \int_{\mathbb R^{\d}} \frac{\hat \func(\freq)}{D(2\pi)^{\d}\tilde p_{\freqRV}(\freq)} e^{i \freq^T \input} \tilde p_{\freqRV}(\freq)
    \int_0^{2\pi}
    \frac{1}{2\pi}d\b
    d\freq    \nonumber\\
    & = \frac{1}{(2\pi)^{\d}}\int_{\mathbb R^{\d}}\hat \func(\freq) e^{i \freq^T \input}  d\freq\nonumber\\
    & = \func(\input).
\end{align}  
 The proof of minimum variance is similar to the proof of~\cite[Thm. 4.3.1]{rubinstein2011simulation}.
\end{proof}

Note that the coefficients $\tilde \Ampl_{\k}$ can be complex in this case.
Next, we show that the optimal p.d.f. for a complex RFE, $\tilde p_{\freqRV}$, is still close-to-optimal (in terms of the second moment) when used in the real RFE from Definition~\ref{def:RFE}. 
\begin{theorem}\label{thm:varbound}
Let $\tilde p_{\freqRV}$ be as in~\eqref{eq:optpdfwComplex}
and let $\RFERV$ with weights $\Ampl_{\k}$ be as in Theorem~\ref{thm:unbiased}. 
Let $P$ be the set of probability distribution functions for $\freqRV_{\k}$ that are positive when $|\hat\func(\freq)|>0$.
Then, we have
\begin{align}
\mathbb{E}_{\tilde p_{\freqRV},p_{\B}}[\RFERV(\input)^2] \leq \sqrt 3 \ \min_{p_{\freqRV} \in P} \mathbb{E}_{p_{\freqRV},p_{\B}}[\RFERV(\input)^2].\label{eq:minvarcomplexAA}
\end{align}
\end{theorem}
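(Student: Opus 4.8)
The plan is to write the second moment $\mathbb{E}_{p_{\freqRV},p_{\B}}[\RFERV(\input)^2]$ explicitly as a functional of the frequency density $p_{\freqRV}$, read off from this functional both its minimizer over $P$ and its value at $\tilde p_{\freqRV}$, and then compare the two numbers using only the elementary fact that a certain trigonometric weight lies between $1$ and $3$.

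First I would exploit the i.i.d.\ structure of the RFE. Writing $\RFERV(\input) = \sum_{\k=1}^{\D} Z_{\k}$ with $Z_{\k} = \Ampl_{\k}\cos(\freqRV_{\k}^T\input + \B_{\k})$, the summands are independent and identically distributed, and $\RFERV$ is unbiased by Theorem~\ref{thm:unbiased}, so $\mathbb{E}[Z_1] = \func(\input)/\D$ and
\[
\mathbb{E}_{p_{\freqRV},p_{\B}}[\RFERV(\input)^2] = \D\,\mathbb{E}[Z_1^2] + \tfrac{\D-1}{\D}\func(\input)^2.
\]
Substituting $\Ampl_1 = \tfrac{2}{\D(2\pi)^{\d}}\tfrac{|\hat\func(\freqRV_1)|}{p_{\freqRV}(\freqRV_1)}\cos(\angle\hat\func(\freqRV_1) - \B_1)$, averaging first over $\B_1$ uniform on $[0,2\pi]$ via the identity $\tfrac{1}{2\pi}\int_0^{2\pi}\cos^2(\alpha-\b)\cos^2(\beta+\b)\,d\b = \tfrac{1}{8}(2+\cos(2\alpha+2\beta))$, and then over $\freqRV_1\sim p_{\freqRV}$, I obtain
\[
\mathbb{E}_{p_{\freqRV},p_{\B}}[\RFERV(\input)^2] = \frac{1}{2\D(2\pi)^{2\d}}\int_{\mathbb{R}^{\d}}\frac{|\hat\func(\freq)|^2\,h(\freq)}{p_{\freqRV}(\freq)}\,d\freq + \frac{\D-1}{\D}\func(\input)^2,
\]
where $h(\freq) := 2 + \cos(2\angle\hat\func(\freq) + 2\freq^T\input) \in [1,3]$; this is consistent with the computation behind Theorem~\ref{thm:minvar}.

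Next I would minimize over $p_{\freqRV}\in P$. Only the first summand depends on $p_{\freqRV}$, and writing the integrand as $(|\hat\func|\sqrt{h})^2/p_{\freqRV}$, the Cauchy--Schwarz inequality gives $\int \tfrac{|\hat\func|^2 h}{p_{\freqRV}}\cdot\int p_{\freqRV} \ge (\int |\hat\func|\sqrt{h})^2$, with equality exactly when $p_{\freqRV}\propto |\hat\func|\sqrt{h}$, i.e.\ for the density $p_{\freqRV}^*$ of~\eqref{eq:optpdfB}, which belongs to $P$ since $h\ge 1$. Hence $\min_{p_{\freqRV}\in P}\mathbb{E}_{p_{\freqRV},p_{\B}}[\RFERV(\input)^2] = \tfrac{1}{2\D(2\pi)^{2\d}}(\int|\hat\func|\sqrt{h})^2 + \tfrac{\D-1}{\D}\func(\input)^2$, while plugging the complex-optimal choice $\tilde p_{\freqRV} = |\hat\func|/\int|\hat\func|$ from~\eqref{eq:optpdfwComplex} into the same formula yields $\mathbb{E}_{\tilde p_{\freqRV},p_{\B}}[\RFERV(\input)^2] = \tfrac{1}{2\D(2\pi)^{2\d}}(\int|\hat\func|)(\int|\hat\func| h) + \tfrac{\D-1}{\D}\func(\input)^2$. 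The proof then closes with two pointwise estimates from $1\le\sqrt{h}\le\sqrt{3}$: namely $\int|\hat\func| \le \int|\hat\func|\sqrt{h}$ and, since $h\le\sqrt{3}\,\sqrt{h}$, $\int|\hat\func| h \le \sqrt{3}\int|\hat\func|\sqrt{h}$; multiplying gives $(\int|\hat\func|)(\int|\hat\func| h)\le\sqrt{3}\,(\int|\hat\func|\sqrt{h})^2$. As $\sqrt{3}\ge 1$, the $\func(\input)^2$ terms obey the same bound trivially, and adding the two inequalities yields the claim $\mathbb{E}_{\tilde p_{\freqRV},p_{\B}}[\RFERV(\input)^2] \le \sqrt{3}\,\min_{p_{\freqRV}\in P}\mathbb{E}_{p_{\freqRV},p_{\B}}[\RFERV(\input)^2]$.

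I expect the main obstacle to be purely bookkeeping: carrying the constants correctly through the uniform average over $\B_1$ and the i.i.d.\ second-moment decomposition so that the functional collapses cleanly to $\int |\hat\func|^2 h/p_{\freqRV}$ with $h\in[1,3]$; once that normal form is in hand the final comparison is immediate. The remaining points are routine — all integrals are finite because the complex construction of Theorem~\ref{thm:minvarcomplex} already presupposes $\int|\hat\func|<\infty$ and $h,\sqrt{h}$ are bounded, and the degenerate case $\func\equiv 0$ (where every expectation vanishes) is dispatched separately.
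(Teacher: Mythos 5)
Your argument is correct, and it rests on the same two pillars as the paper's proof (which is carried out in Appendix~\ref{app:pw} via Theorems~\ref{thm:minvar} and~\ref{thm:Vardistances}): the explicit second-moment representation $\frac{1}{2\D(2\pi)^{2\d}}\int|\hat\func(\freq)|^2h(\freq)/p_{\freqRV}(\freq)\,d\freq$ with $h=2+\cos(2\angle\hat\func(\freq)+2\freq^T\input)\in[1,3]$, and the identification of the minimizing density $p^*_{\freqRV}\propto|\hat\func|\sqrt h$. Where you differ is in how the comparison is closed. The paper never evaluates the functional at $\tilde p_{\freqRV}$ in closed form: it derives the pointwise sandwich $\tfrac{1}{\sqrt3}\,p^*_{\freqRV}\le\tilde p_{\freqRV}\le\sqrt3\,p^*_{\freqRV}$ from $1\le\sqrt h\le\sqrt3$, bounds $1/\tilde p_{\freqRV}\le\sqrt3/p^*_{\freqRV}$ inside the integral, and invokes Theorem~\ref{thm:minvar} (proved via Jensen's inequality) for the optimality of $p^*_{\freqRV}$. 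You instead compute both sides explicitly — $(\int|\hat\func|)(\int|\hat\func|\,h)$ at $\tilde p_{\freqRV}$ versus $(\int|\hat\func|\sqrt h)^2$ at the minimum, the latter re-derived self-containedly by Cauchy--Schwarz — and compare them by multiplying the two elementary bounds $\int|\hat\func|\le\int|\hat\func|\sqrt h$ and $\int|\hat\func|\,h\le\sqrt3\int|\hat\func|\sqrt h$. Both mechanisms are short; yours buys a self-contained proof with explicit expressions for both sides, while the paper's pointwise density bound extends immediately to the lower inequality in \eqref{eq:minvarcomplexA}. One further point in your favour: your decomposition $\mathbb{E}[\RFERV(\input)^2]=\D\,\mathbb{E}[Z_1^2]+\tfrac{\D-1}{\D}\func(\input)^2$ (in your notation) is the exact one, whereas the computation in Appendix~\ref{app:pw} effectively identifies the second moment with $\D\,\mathbb{E}[Z_1^2]$, its variance formula subtracting $\func(\input)^2$ where $\func(\input)^2/\D$ would be exact; since the extra term is common to every $p_{\freqRV}\in P$, is nonnegative, and $\sqrt3\ge1$, your remark that it passes through the inequality harmlessly also quietly repairs this small bookkeeping slip.
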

The proof is given in Appendix~\ref{app:pw}. We now discuss how to choose $p_{\freqRV}$ in practice.

If no information of $|\hat \func|$ is available, the standard approach of choosing $p_{\freqRV}$ as a zero-mean normal distribution can be used. The variance $\sigma^2$ is an important hyper-parameter in this case, and any method of hyper-parameter tuning can be used to find it. However, most hyper-parameter optimization methods are computationally expensive because they require running the whole algorithm multiple times. In the case that $|\hat \func|$ is not exactly known, but some information about it is available (because it can be estimated or measured for example), this can be circumvented. The variance $\sigma^2$ can simply be chosen in such a way that $p_{\freqRV}$ most resembles the estimate for $|\hat \func|$, using standard optimization techniques or by doing this by hand. In this approach, it is not necessary to run the algorithm at all, which is a big advantage compared to most hyper-parameter tuning methods.
All of this leads to a rule of thumb for choosing $p_{\freqRV}$ as given in  Algorithm~\ref{alg:pwROT}.

\begin{algorithm}
\caption{Rule of thumb for choosing $p_{\freq}$}\label{alg:pwROT}
\begin{algorithmic}[1]
\If{$|\hat \func|$ is known exactly}
\State Set $p_{\freqRV} = |\hat \func|/
\int |\hat \func(\freq)| d\freq$.
\Else
\State Measure or estimate  $|\hat \func|$.
\State Determine $\sigma^2$ for which the pdf of $\mathcal N(0,\sigma^2 \mathbf{I}_{\d \times \d})$ is close in shape to $|\hat \func|/\int |\hat \func(\freq)| d\freq$.
\State Set $p_{\freqRV} = \mathcal N(0,\sigma^2 \mathbf{I}_{\d \times \d})$.
\EndIf
\end{algorithmic}
\end{algorithm}

\subsection{Upper Bound on the Regularization Parameter}
\label{sec:Ruleofthumbreg}

The regularization parameter $\reg$ in the performance criterion ~\eqref{eq:LS} is used to prevent under- or over-fitting of the RFE under noisy conditions or when dealing with few measurements.
Theorem~\ref{thm:LSconv} guarantees the convergence of the least squares solution only if the regularization parameter satisfies $\reg\leq N\Lambda$, where $N$ is the total number of samples and $\Lambda$ is defined in \eqref{eq:LambdaEQ}.
Here we will provide a method to estimate $\Lambda$.
 
During the proof of Theorem~\ref{thm:LSconv}, it was shown that the upper bound $\Lambda$ corresponds to the $\lambda$ that satisfies
\begin{align}
& \left|\left|	\left( \A_N^T \A_N +  N\lambda \  \mathbf{I}_{\D\times\D}\right)^{-1} \A_N^T \mathbf y_N\right|\right|_2^2 \nonumber\\
& = \sum_{\k=1}^{\D}\left(\frac{\amplTH(\freq_{\k},\b_{\k})}{(2\pi)^{\d}\D p_{\freqRV}(\freq_{\k})p_{\B}(\b_{\k})}\right)^2 = M^2. \label{eq:upperboundLA}
\end{align}
The left-hand side in this equation is easily evaluated for different values of $\lambda$. 
Thus, in order to estimate $\Lambda$, all we need is an approximation of the unknown right hand $M^2$.

Like in Section~\ref{sec:hypfreq}, it is assumed that no information about $\angle \hat \func$ is available, but that $|\hat \func|$ can be measured or estimated. 
Under the assumptions that $\D$ is large and that $p_{\freqRV}$ is a good approximation of $\tilde p_{\freqRV} = |\hat \func(\freq)|/ \int_{\mathbb R^{\d}} |\hat \func(\freq)| d\freq$ as in Algorithm~\ref{alg:pwROT}, we obtain the following approximation of $M$:
\begin{align}
M & = \frac{2}{(2\pi)^{\d}} \sqrt{\frac{1}{\D^2}\sum_{\k=1}^{\D}\left(\frac{|\hat \func(\freq_{\k})|}{ p_{\freqRV}(\freq_{\k})} \cos(\angle \hat \func(\freq_{\k}) - \b_{\k})\right)^2}\nonumber\\
 & \approx \frac{2}{(2\pi)^{\d}} \sqrt{\frac{1}{\D}\mathbb E\left[\left(\frac{|\hat \func(\freqRV_1)|}{ p_{\freqRV}(\freqRV_1)} \cos(\angle \hat \func(\freqRV_1) - \B_1)\right)^2\right]}\nonumber\\
 & = \frac{2}{(2\pi)^{\d}} \sqrt{\frac{1}{2\pi\D}\int_{\mathbb R^{\d}}\int_0^{2\pi}\frac{|\hat \func(\freq)|^2}{ p_{\freqRV}(\freq)} \cos^2(\angle \hat \func(\freq) - \b) d\b d\freq}\nonumber\\
 & = \frac{\sqrt{2}}{(2\pi)^{\d} \sqrt{\D}} \sqrt{\int_{\mathbb R^{\d}}\frac{|\hat \func(\freq)|^2}{ p_{\freqRV}(\freq)} d\freq}\nonumber\\
 & \approx \frac{\sqrt{2}}{(2\pi)^{\d} \sqrt{\D}} \sqrt{\int_{\mathbb R^{\d}}\frac{|\hat \func(\freq)|^2}{ \tilde p_{\freqRV}(\freq)} d\freq}\nonumber\\
 & = \frac{\sqrt{2}}{(2\pi)^{\d} \sqrt{\D}}\int |\hat \func(\freq)| d\freq = M_a.
\end{align}
The squared cosine was removed as in Eq.~\eqref{eq:normcisf}.
Using the exact value or an estimate of $\int_{\mathbb R^{\d}} |\hat \func(\freq)| d\freq$ as in Algorithm~\ref{alg:pwROT} to determine $M_a$, we 
calculate the left-hand in~\eqref{eq:upperboundLA}
for multiple values of $\Lambda$ and take the value for which 
it is closest to $M_a^2$.
The procedure is summarized in Algorithm~\ref{alg:lambdaROT}.

\begin{algorithm}
\caption{Rule of thumb for finding an estimate of $\Lambda$}\label{alg:lambdaROT}
\begin{algorithmic}[1]
\State Run Algorithm~\ref{alg:pwROT} to get $\int_{\mathbb R^{\d}} |\hat \func(\freq)| d\freq$.
\State Take $N$ measurements to get $\A_N$ and $\mathbf y_N$.
\State Determine $\Lambda$ for which 
the left-hand side of~\eqref{eq:upperboundLA}
is close to 
$M_a^2 = \frac{2}{(2\pi)^{2\d} {\D}}\left(\int |\hat \func(\freq)| d\freq\right)^2$.
\end{algorithmic}
\end{algorithm}

\section{Numerical Examples}\label{sec:appl}

In this section, we compare the DONE algorithm to the Bayesian optimization library  BayesOpt~\cite{martinez2014} in several numerical examples.

\subsection{Analytic Benchmark Problem: Camelback Function}\label{sec:camelback}
The camelback function 
\begin{align}
f(\input) = \left(4 - 2.1x_1^2 + \frac{x_1^4}{3}\right) x_1^2 + x_1 x_2 + \left(-4 + 4x_2^2\right) x_2^2,
\end{align}
where $\input = [x_1,x_2]\in [-2,2]\times [-1, 1]$, is a standard test function with two global minima and two local minima. 
The locations of the global minima are approximately $(0.0898, -0.7126)$ and $(-0.0898, 0.7126)$ with an approximate function value of $-1.0316$.
We determined the hyper-parameters for DONE on this test function as follows. 
First, we computed the Fourier transform of the function. 
We then fitted a function $h(\freq) = \frac{C}{\sigma\sqrt{2\pi}} e^{-\frac{\freq^2}{2\sigma^2}}$ to the magnitude of the Fourier transform in both directions. This was done by trial and error, giving a value of $\sigma = 10$.
To validate, two RFEs were fit to the original function using a normal distribution with standard deviation $\sigma=10$ (good fit) and $\sigma=0.1$ (bad fit) for $\freq_{\k}$, using the least squares approach from Section~\ref{sec:LSconv}. 
Here, we used $N=1000$ measurements sampled uniformly from the input domain, the number of basis functions $\D$ was set to $500$, and a regularization parameter of $\reg = 10^{-10}$ was used. 
The small value for $\reg$ still works well in practice because the function $f$ does not contain noise.

Let $\RFE(\input)$ denote the value of the trained RFE at point $\input$. We investigated the root mean squared error (RMSE)
\begin{align}
	\mathrm{RMSE} = \sqrt{\frac{1}{N} \sum_{n=1}^N (f(\input_n)-g(\input_n))^2},
\end{align}
for the two stated values of $\sigma$.
The good fit gave a RMSE of $5.5348\cdot 10^{-6}$, while the bad fit gave a RMSE  of $0.2321$, which shows the big impact of this hyper-parameter on the least squares fit.

We also looked at the difference between using the real RFE from Definition~\ref{def:RFE} and the complex RFE from~Theorem~\ref{thm:minvarcomplex}, for $\sigma=10$,
and for different values of $\D$ ($\D \in \{10, 20, 40, 80, 160, 320, 640, 1280\}$).
Fig.~\ref{fig:realvscomplex} shows the mean and standard deviation of the RMSE 
over $100$ runs. 
We see that the real RFE 
indeed performs similar to the complex RFE as predicted by Theorem~\ref{thm:Vardistances} in Appendix~\ref{app:pw}.
\begin{figure}[htbp]
\centerline{
\includegraphics[width=0.9\columnwidth]
{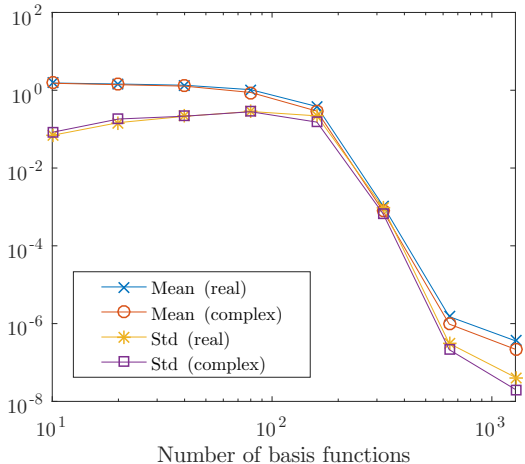}}
\caption{Mean and standard deviation of the root mean square error for a real and a complex RFE over $100$ runs. 
}
\label{fig:realvscomplex}
\end{figure}

Using the hyper-parameters $\sigma =10$ and $\reg = 10^{-10}$, we also performed $10$ runs of the DONE algorithm and compared it to reproduced results from~\cite[Table 1]{martinez2014} (method ``BayesOpt1''). 
The number of basis functions $\D$ was set to $500$, one of the smallest values with a RMSE of below $10^{-5}$ according to Fig.~\ref{fig:realvscomplex}, and the initial guess was chosen randomly. The exploration parameters $\sigma_{\perturb}$ and $\sigma_{\xi}$ were set to $0.01$. The resulting distance to the true minimum and the computation time in seconds (with their standard deviations) for $50$ and $100$ measurements can be found in Table~\ref{tab:benchmarkresults}. As in~\cite{martinez2014}, the computation time for BayesOpt was only shown for $100$ samples and the accuracy below $10^{-5}$ was not shown.
It can be seen that the DONE algorithm is several orders of magnitude more accurate and about $5$ times faster when compared to BayesOpt for this problem.

\begin{table}[htbp]
\renewcommand{\arraystretch}{1.3}
\caption{DONE vs BayesOpt on the Camelback function}
\label{tab:benchmarkresults}
\centering
\begin{tabular}{|c|c|c|}
\hline
 & Dist. to min. ($50$ samp.) & Time ($50$ samp.) \\
\hline
DONE & $2.1812\cdot 10^{-9} \ (8.3882\cdot 10^{-9})$ & $0.0493\  (0.0015)$ \\
\hline
BayesOpt & $0.0021 \ (0.0044)$ & - \\
\hline
\hline
 & Dist. to min. ($100$ samp.) & Time ($100$ samp.) \\
\hline
DONE & $  1.1980 \cdot 10^{-9} \ (5.2133 \cdot 10^{-9})$ & $0.0683\  (0.0019)$\\
\hline
BayesOpt & $<1\cdot 10^{-5} \ (<1\cdot 10^{-5})$ & $0.3049 \ (0.0563)$ \\
\hline
\end{tabular}
\end{table}

\subsection{Optical Coherence Tomography}\label{sec:OCT}

Optical coherence tomography (OCT) is a low-coherence interferometry imaging technique used for making three-dimensional images of a sample. 
The quality and resolution of images is reduced by optical wavefront aberrations caused by the medium, e.g., the human cornea when imaging the retina.
These aberrations can be removed by using active components such as deformable mirrors in combination with optimization algorithms \cite{Bonora13,Verstraete15}.
The arguments of the optimization can be the voltages of the deformable mirror or a mapping of these voltages to other coefficients such as the coefficients of Zernike polynomials.
The intensity of the image at a certain depth is then maximized to remove as much of the aberrations as possible.
In \cite{Verstraete15} it was shown experimentally that the DONE algorithm greatly outperforms other derivative-free algorithms in final root mean square (RMS) wavefront error and image quality.
Here, we numerically compare the DONE algorithm to BayesOpt~\cite{martinez2014}. 
The numerical results are obtained by simulating the OCT transfer function as described in \cite{Verstraete15proc,Verstraete14} and maximizing the OCT signal.
The input dimension for this example is three. Three Zernike aberrations are considered, namely the defocus and two astigmatisms. 
These are generally the largest optical wavefront aberrations in the human eye.
The noise of a real OCT signal is approximated by adding Gaussian white noise with a standard deviation of 0.01.
\begin{figure}[htbp]
\centerline{\includegraphics[width=0.85\columnwidth]{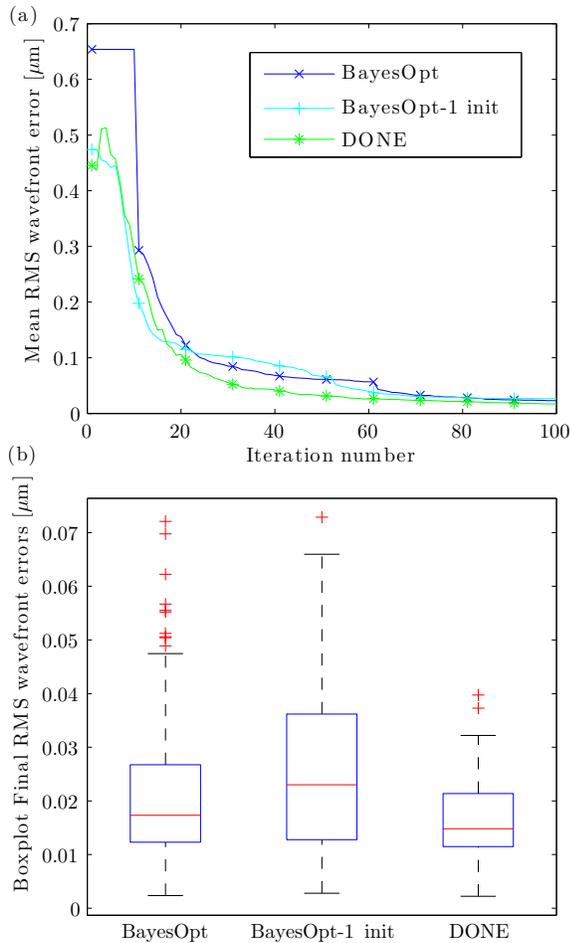}}
\caption{(a) The RMS wavefront error of DONE and BayesOpt averaged over 100 simulations versus the number of iterations. (b) A boxplot of 100 final RMS wavefront errors after 100 iterations for DONE and BayesOpt.On each box, the central line is the median, the edges of the box are the 25th and 75th percentiles, and the whiskers extend to the most extreme data points not considered outliers. Outliers are plotted individually. }
\label{fig:DONEvsBayesOptOCT}
\end{figure}
The results are shown in Fig.~\ref{fig:DONEvsBayesOptOCT}. 
For the DONE algorithm the same parameters are used as described in \cite{Verstraete15}, only $\reg$ is chosen to be equal to $3$.
The number of cosines $D = 1000$ is chosen as large as possible such that the computation time still remains around $1$ ms. This is sufficiently fast to keep up with modern OCT B-scan acquisition and processing rates. 
The DONE algorithm is compared to BayesOpt with the default parameters and to BayesOpt with only one instead of $10$ prior measurements, the latter is referred to as BayesOpt-1 init. 
Other values for the parameters of BayesOpt, obtained with trial and error, did not result in a significant performance increase.
To use the BayesOpt algorithm, the inputs had to be normalized between 0 and 1. 
For each input aberration, the region -0.45 $\mu$m to 0.45 $\mu$m was scaled to the region 0 to 1.
The results for BayesOpt and DONE are very similar.
The mean error of the DONE algorithm is slightly lower than the BayesOpt algorithm.
However, the total average computation time for the DONE algorithm was $93$ ms, while the total average computation time of Bayesopt was $1019$ ms. 

\subsection{Tuning of an Optical Beam-forming Network}

In wireless communication systems, optical beam-forming networks (OBFNs) can be used to steer the reception or transmission angle of a phased array antenna~\cite{hansen2009phased} in the desired direction. 
In the case of reception, the signals that arrive at the different antenna elements of the phased array are combined in such a way that positive interference of the signals occurs only in a specific direction. A device based on optical ring resonators~\cite{roeloffzen2005ring} (ORRs) that can perform this signal processing technique in the optical domain was proposed in~\cite{meijerink2010novel}. This OBFN can provide accurate control of the reception angle in broadband wireless receivers.

To achieve a maximal signal-to-noise ratio (SNR), the actuators in the OBFN need to be adapted according to the desired group delay of each OBFN path, which can be calculated from the desired reception angle. Each ORR is controlled by two heaters that influence its group delay, however the relation between heater voltage and group delay is nonlinear. Even if the desired group delay is available, controlling the OBFN comes down to solving a nonlinear optimization problem. Furthermore, the physical model of the OBFN can become quite complex if many ORRs are used, and the models are prone to model inaccuracies. Therefore, a black-box approach like in the DONE algorithm could help in the tuning of the OBFN. Preliminary results using RFEs in an offline fashion on this application can be found in~\cite{Bliek2015166}. Here, we demonstrate the advantage of online processing in terms of performance by using DONE instead of the offline algorithm in~\cite{Bliek2015166}.

An OBFN simulation based on the same physical models as in~\cite{Bliek2015166} will be used in this section, with the following differences: 1) the implementation is done in C++; 2) ORR properties are equal for each ORR; 3) heater voltages with offset and crosstalk~\cite[Appendix B]{zhuang2010ring} have been implemented; 4) a small region outside the bandwidth of interest has a desired group delay of $0$; 5) an $8\times 1$ OBFN with $12$ ORRs is considered; 6) the standard deviation of the measurement noise was set to $7.5 \cdot 10^{-3}$. The input of the simulation is the normalized heater voltage for each ORR, and the output is the corresponding mean square error of the difference between OBFN path group delays and desired delays. The simulation contains $24$ heaters (two for each ORR, namely one for the phase shift and one for the coupling constant), making the problem $24$-dimensional. Each heater influences the delay properties of the corresponding ORR, and together they influence the OBFN path group delays.

The DONE algorithm was used on this simulation to find the optimal heater voltages. The number of basis functions was $D=6000$, which was the lowest number that gave an adequate performance. The p.d.f. $p_{\freqRV}$ was a normal distribution with variance $0.5$. 
The regularization parameter was $\lambda=0.1$. 
The exploration parameters were $\sigma_{\perturb}=\sigma_{\xi}=0.01$. 
In total, $3000$ measurements were taken.

Just like in the previous application, the DONE algorithm was compared to the Bayesian optimization library BayesOpt~\cite{martinez2014}. The same simulation was used in both algorithms, and BayesOpt also had $3000$ function evaluations available. The other parameters for BayesOpt were set to their default values, except for the noise parameter which was set to $0.1$ after calculating the influence of the measurement noise on the objective function. Also, in-between hyper-parameter optimization was turned off after noticing it did not influence the results while being very time-consuming.

The results for both algorithms are shown in Fig.~\ref{fig:DONEvsBayesOptOBFN}. The found optimum at each iteration is shown for the two algorithms. For DONE, the mean of $10$ runs is shown, 
while for BayesOpt only one run is shown because of the much longer computation time. The dotted line represents an offline approach: it is the average of $10$ runs of a similar procedure as in~\cite{Bliek2015166}, where a RFE with the same hyper-parameters as in DONE was fitted to $3000$ random measurements and then optimized. The figure clearly shows the advantage of the online approach: because measurements are only taken in regions where the objective function is low, the RFE model can become very accurate in this region. The figure also shows that DONE outperforms BayesOpt for this application in terms of accuracy. On top of that, the total computation time shows a big improvement: one run of the DONE algorithm took less than $2$ minutes, while one run of BayesOpt took $5800$ minutes. 

The big difference in computation time for the OBFN application can be explained by looking at the total number of measurements $\N$. 
Even though the input dimension is high compared to the other problems, $\N$ is the main parameter that causes BayesOpt to slow down for a large number of measurements. 
This is because the models used in Bayesian optimization typically depend on the kernel matrix of all samples, which will increase in size each iteration. 
The runtime for one iteration of the DONE algorithm is, in contrast, independent of the number of previous measurements.

\begin{figure}[htbp]
\centerline{\includegraphics[width=0.9\columnwidth]{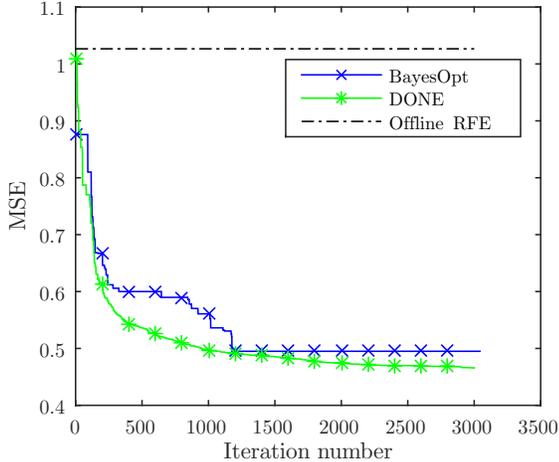}}
\caption{The mean square error of DONE and BayesOpt applied to the OBFN application, plotted versus the number of iterations. For DONE, the values are averaged over $10$ runs. For BayesOpt only 1 run is shown. The dotted line is the result of fitting a RFE using $3000$ random measurements and optimizing that RFE, averaged over $10$ runs.}
\label{fig:DONEvsBayesOptOBFN}
\end{figure}

\subsection{Robot Arm Movement}\label{sec:robotarm}
The previous two examples have illustrated how the DONE algorithm outperforms BayesOpt in terms of speed (both OCT and OFBN) and how its online processing scheme reduces the number of required measurements compared to offline processing (OFBN), respectively. The dimensions in both problems were three and 27, respectively, which is still relatively modest. To illustrate that DONE also works in higher dimensions, we will now consider a toy example from robotics. The following model of a three-link-planar robot, which has been adapted from~\cite{de2009method}, is considered:
\begin{align}
a_i(k) & = u_i(k) + \sin\left(\pi/180\sum_{j=1}^i \alpha_j(k-1)\right)\cdot 9.8\cdot 0.05,\\
v_i(k) & = v_i(k-1) + a_i(k),\\
\alpha_i(k) & = \alpha_i(k-1) + v_i(k),\\
x(k) & = \sum_{j=1}^3 l_j\cos\left(\pi/2 + \pi/180\sum_{j=1}^i \alpha_j(k)\right),\\
y(k) & = \sum_{j=1}^3 l_j\sin\left(\pi/2 + \pi/180\sum_{j=1}^i \alpha_j(k)\right).
\end{align}
Here, $\alpha_i(k)$ represents the angle in degrees of link $i$ at time step $k$, $v_i(k)$ and $a_i(k)$ are the first and second derivative of the angles, $u_i(k)\in [-1, 1]$ is the control input, $x(k)$ and $y(k)$ denote the position of the tip of the arm, and $l_1 = l_2 = 8.625$ and $l_3 = 6.125$ are the lengths of the links. The variables are initialized as $a_i(0) = v_i(0) = \alpha_i(0) = 0$ for $i=1,2,3$.
We use the DONE algorithm to design a sequence of control inputs $u_i(1),\dots,u_i(50)$ such that the distance between the tip of the arm and a fixed target at location $(6.96,12.66)$ at the $50$-th time step is minimized. 
The input for the DONE algorithm is thus a vector containing $u_i(k)$ for $i=1,2,3$ and $k=1,\ldots, 50$.
This makes the problem $150$-dimensional.
The output is the distance between the tip and the target at the $50$-th time step.
The initial guess for the algorithm was set to a random control sequence with a uniform distribution over the set $[-1, 1]$ for each robot arm $i$. 
We would like to stress that this example has been chosen for its high-dimensional input. We do not consider this approach a serious contender for specialized control methods in robotics.

The hyper-parameters for the DONE algorithm were chosen as follows.
The number of basis functions was $D=3000$, which was the lowest number that gave consistent results.
The regularization parameter was $\lambda=10^{-3}$.
The p.d.f. $p_{\freqRV}$ was set to a normal distribution with variance one. 
The exploration parameters were set to $\sigma_{\perturb}=\sigma_{\xi}=5\cdot 10^{-5}$.
The number of measurements $N$ was set to $10 000$.

No comparison with other algorithms has been made for this application. 
The computation time of the Bayesian optimization algorithm scales with the number of measurements and would be too long with $10000$ measurements, as can be seen in Table~\ref{tab:times}.
Algorithms like reinforcement learning use other principles, hence no comparison is given.
Our main purpose with this application is to demonstrate the applicability of the DONE algorithm to high-dimensional problems.
\begin{figure}[htbp]
\centerline{\includegraphics[width=0.9\columnwidth]{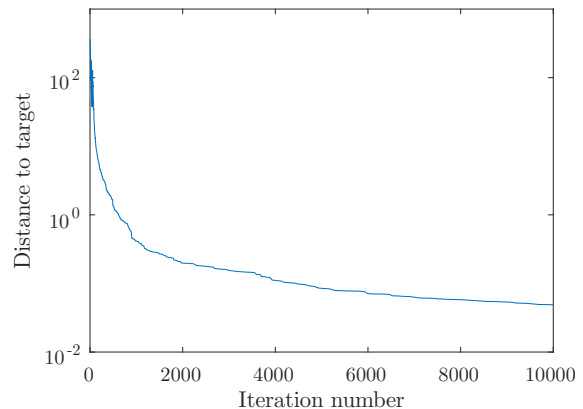}}
\caption{The mean distance to target for the robot arm at time step $50$, after minimizing this distance with DONE, plotted versus the number of iterations, averaged over $10$ runs.}
\label{fig:robotarm}
\end{figure}
Figure~\ref{fig:robotarm} shows the distance to the target at time step $50$ for different iterations of the DONE algorithm, averaged over $10$ runs with different initial guesses.
The control sequences converge to a sequence for which the robot arm goes to the target, i.e., DONE has successfully been applied to a problem with a high input dimension. 
The number of basis functions required did not increase when compared to the other applications in this paper, although more measurements were required.
The computation time for this example and the other examples is shown in Table~\ref{tab:times}.

\begin{table}[htbp]
\renewcommand{\arraystretch}{1.3}
\caption{Computation Time: DONE vs BayesOpt}
\label{tab:times}
\centering
\begin{tabular}{|c|c|c|c|c|c|}
\hline
Problem & Method & Input dim. & $\N$ & $\D$ & Time (s) \\
\hline
\multirow{2}{*}{Camelback} & DONE & $2$ & $100$ & $50$ & 
$0.0683$ \\
& BayesOpt & $2$ & $100$ & - & $0.3049$ \\
\hline
\multirow{2}{*}{OCT} & DONE & $3$ & $100$ & $1000$ & $0.093$ \\
& BayesOpt & $3$ & $100$ & - & $1.019$ \\
\hline
\multirow{2}{*}{OBFN} & DONE & $24$ & $3000$ & $6000$ & 
$99.7$ \\
& BayesOpt & $24$ & $3000$ & - & $3.48\cdot 10^5$ \\
\hline
\multirow{1}{*}{Robot arm} & DONE & $150$ & $10000$ & $3000$ & $99.1$ \\
\hline
\end{tabular}
\end{table}

\section{Conclusions}\label{sec:conclusion}

We have analyzed an online optimization algorithm called DONE that is used to find the minimum of a function using measurements that are costly and corrupted by noise.
DONE maintains a surrogate model in the form of a random Fourier expansion (RFE), which is updated whenever a new measurement is available, and minimizes this surrogate with standard derivative-based methods.
This allows to measure only in regions of interest, reducing the overall number of measurements required.
The DONE algorithm is comparable to Bayesian optimization algorithms, but it has the distinctive advantage that the computational complexity of one iteration does not grow with the number of measurements that have already been taken.

As a theoretical result, we have shown that a RFE that is trained with linear least squares can approximate square integrable functions arbitrarily well, with high probability. An upper bound on the regularization parameter used in this training procedure was given, as well as an optimal and a more practical probability distribution for the parameters that are chosen randomly. 
We applied the DONE algorithm to an analytic benchmark problem and to three applications: optical coherence tomography, optical beam-forming network tuning, and a robot arm.
We compared the algorithm to BayesOpt, a Bayesian optimization library.
The DONE algorithm gave accurate results on these applications while being faster than the Bayesian optimization algorithm, due to the fixed computational complexity per iteration.

\appendices

\section{Proof of convergence of the least squares solution}\label{ap:proofLStheorem}
In this section, we show that using the least squares solution in the RFE gives a function that approximates the true unknown function $\func$.
To prove this, we make use of the results in~\cite{rahimi2008uniform} and of \cite[Thm. 2]{jennrich1969asymptotic} and \cite[Key Thm.]{vapnik1999overview}.
\begin{proof}[Proof of Theorem~\ref{thm:LSconv}]
Let the constant $m>0$ be given by
\begin{align}
& m = \left|\left|	\left(\frac{1}{N} \A_N^T \A_N + \frac{\reg}{N}  \mathbf{I}_{\D\times\D}\right)^{-1}\frac{1}{N} \A_N^T \mathbf y_N\right|\right|_2,
\end{align}
and define the set
 $   	C_m = \{\mathbf{\ampl} \in \mathbb R^{\D}: \ ||\mathbf{\ampl}||_2 \leq m\}$.
	Note that $C_m$ is a compact set.
    The least squares weight vector
\begin{align}
    \mathbf \amplLS
    & = \left(\A_N^T \A_N + \reg \mathbf{I}_{\D\times\D}\right)^{-1}\A_N^T \mathbf y_N\nonumber\\
    & = \left(\frac{1}{N}\A_N^T \A_N + \frac{\reg}{N}\mathbf{I}_{\D\times\D}\right)^{-1}\frac{1}{N}\A_N^T \mathbf y_N,
\end{align}
is also the solution to the constrained, but unregularized least squares problem (see \cite[Sec. 12.1.3]{golub2012matrix})
\begin{align}
    \mathbf \amplLS = \argmin_{\mathbf{\ampl} \in C_m}
 &  \frac{1}{N}||\mathbf y_N - \A_N \mathbf \ampl||_2^2.
\end{align}
Now, note that a decrease in $\reg$ leads to an increase in $m$. Since  $\reg/N \leq \Lambda$ by assumption and the upper bound $\Lambda$ in Theorem~\ref{thm:LSconv} satisfies 
\begin{align}
& \left|\left|	\left(\frac{1}{N} \A_N^T \A_N +  \Lambda \  \mathbf{I}_{\D\times\D}\right)^{-1}\frac{1}{N} \A_N^T \mathbf y_N\right|\right|_2
= M,\\
&M = \sqrt{\sum_{\k=1}^{\D}\left(\frac{\amplTH(\freq_{\k},\b_{\k})}{(2\pi)^{\d}\D p_{\freqRV}(\freq_{\k})p_{\B}(\b_{\k})}\right)^2},
\end{align}
we have that $m\geq M$.
We will need this lower bound on $m$ to make use of the results in~\cite{rahimi2008uniform} later on in this proof.

Recall from Section~\ref{sec:LSconv} that the vector $\mathbf{y}_N$ depends on the function evaluations and on measurement noise $\eta$ that is assumed to be zero-mean and of finite variance $\sigma_{\etaRV}^2$.
    We first consider the noiseless case, i.e. $y_n = \func(\input_n)$.
    For $\input \in \inputspace$, $\mathbf{\ampl}\in \mathbb R^{\D},$ let 
    \begin{align}
    E(\input, \mathbf{\ampl}) & =  \func(\input) -  \sum_{\k=1}^{\D} \ampl_{\k} \cos(\freq_{\k}^T \input + \b_{\k}).
    \end{align}
    Using the Cauchy-Schwarz inequality, we have the following bound for all $\input \in \inputspace$, $\mathbf {\ampl} \in C_m$:
    \begin{align}
    	E(\input, \mathbf{\ampl})^2 
        & = \func(\input)^2 +  \left( \sum_{\k=1}^{\D} \ampl_{\k} \cos(\freq_{\k}^T \input + \b_{\k})\right)^2 
        \nonumber\\ & \quad
        - 2\func(\input) \sum_{\k=1}^{\D} \ampl_{\k} \cos(\freq_{\k}^T \input + \b_{\k})  \nonumber\\
        & \leq \func(\input)^2 +  \left( \sum_{\k=1}^{\D} \ampl_{\k} \cos(\freq_{\k}^T \input + \b_{\k})\right)^2 
        \nonumber\\ & \quad
        + 2\left|\func(\input)\right| \left|  \sum_{\k=1}^{\D} \ampl_{\k} \cos(\freq_{\k}^T \input + \b_{\k}) \right| \nonumber\\
        & \leq \func(\input)^2 +  \sum_{\k=1}^{\D} \left|\ampl_{\k}\right|^2 + 2\left|\func(\input)\right|  \sqrt{\sum_{\k=1}^{\D}\left| \ampl_{\k}  \right|^2} \nonumber\\
        & \leq \func(\input)^2 + m^2 + 2\func(\input) m\nonumber\\
        & \leq \left(||\func||_{\infty} + m\right)^2
        .\label{eq:boundforE}
	\end{align}
    Note that $E(\input, \mathbf{\ampl})$ is continuous in $\mathbf{\ampl}$ and measurable in $\input$. Let now $\inputRV_{n}$ denote i.i.d. random vectors with distribution $p_{\inputRV}$. Using Theorem~\cite[Thm. 2]{jennrich1969asymptotic} we get, with probability one,
    \begin{align}
    	\lim_{N\rightarrow \infty} \sup_{\mathbf \ampl \in C_m} \left|\frac{1}{N} \sum_{n=1}^N E(\inputRV_n, \mathbf{\ampl})^2 - \int_{\inputspace} E(\input, \mathbf{\ampl})^2 p_{\inputRV}(\input) d\input \right| = 0.
    \end{align}
    Since almost sure convergence implies convergence in probability~\cite[Ch. 2]{van2000asymptotic}, we also have:
    \begin{align}
    	& \lim_{N\rightarrow \infty} P\left( \sup_{\mathbf \ampl \in C_m} \left|\frac{1}{N} \sum_{n=1}^N E(\inputRV_n, \mathbf{\ampl})^2
        \right.\right. \nonumber\\ & \qquad \left.\left. 
        - \int_{\inputspace} E(\input, \mathbf{\ampl})^2 p_{\inputRV}(\input) d\input \right| > \epsilon \right) = 0 \ \forall \epsilon > 0. \label{eq:convprob}
    \end{align}
We will need this result when considering the case with noise.  
For the case with noise, i.e.  $y_n = \func(\input_n)+\eta_n$, let 
    \begin{align}\label{eq:defEtilde}
    \tilde E(\input, \eta, \mathbf{\ampl})^2 & =  \left(\func(\input) + \eta - \sum_{\k=1}^{\D} \ampl_{\k} \cos(\freq_{\k}^T \input + \b_{\k}) \right)^2\nonumber\\
    & = E(\input,\mathbf{\ampl})^2  + 2\eta E(\input,\mathbf{\ampl}) + \eta^2
    .
    \end{align}
    Using the properties of the noise $\eta$ with p.d.f. $p_{\etaRV}$, this gives the following mean square error:
    \begin{align}
    & \int_{\mathbb R} \int_{\inputspace} \tilde E(\input,\eta,\mathbf{\ampl})^2 p_{\inputRV}(\input) p_{\etaRV}(\eta)d\input d\eta \nonumber\\
    & = \int_{\inputspace}E(\input,\mathbf{\ampl})^2 p_{\inputRV}(\input) \left(\int_{\mathbb R} p_{\etaRV}(\eta) d\eta\right) d\input 
    \nonumber\\ & \quad
    +  2\int_{\inputspace} E(\input,\mathbf{\ampl}) \left(\int_{\mathbb R} \eta p_{\etaRV}(\eta) d\eta \right) p_{\inputRV}(\input) d\input 
    \nonumber\\ & \quad
    + \int_{\inputspace} p_{\inputRV}(\input) \left( \int_{\mathbb R}\eta^2 p_{\etaRV}(\eta) d\eta \right) d\input\nonumber\\
    & =     \int_{\inputspace}E(\input,\mathbf{\ampl})^2 p_{\inputRV}(\input) d\input + \int_{\inputspace} E(\input,\mathbf{\ampl}) \underbrace{\mathbb E[\etaRV_n]}_{=0}p_{\inputRV}(\input)d\input \nonumber\\
    & \quad  + \mathbb E[\etaRV_n^2]\nonumber\\
    & = \int_{\inputspace}E(\input,\mathbf{\ampl})^2 p_{\inputRV}(\input) d\input + \sigma_{\etaRV}^2.
    \label{eq:Etildeint}
    \end{align}
    Here, $\etaRV_n$ is a random variable with distribution $p_{\etaRV}$.
	For any choice of $\epsilon_0, \epsilon_1, \epsilon_2, \epsilon_3 >0$ such that $\epsilon_1 + \epsilon_2 + \epsilon_3 = \epsilon_0$, we have, following a similar proof as in~\cite[Thm. 3.3(a)]{beitollahi2012convergence}:
    \begin{align}
    & \ P\left( \sup_{\mathbf \ampl \in C_m} \left|\frac{1}{N} \sum_{n=1}^N \tilde E(\inputRV_n, \etaRV_n, \mathbf{\ampl})^2 -
      \right.\right. \nonumber\\ & \qquad \left.\left.
    \int_{\inputspace} \int_{\mathbb R} \tilde E(\input, \eta, \mathbf{\ampl})^2 p_{\inputRV}(\input) p_{\etaRV}(\eta) d\input d\eta \right| > \epsilon_0 \right) \nonumber\\
    = &  \ P\left( \sup_{\mathbf \ampl \in C_m} \left|\frac{1}{N} \sum_{n=1}^N E(\inputRV_n, \mathbf{\ampl})^2
    + \frac{2}{N} \sum_{n=1}^N \etaRV_n E(\inputRV_n, \mathbf{\ampl})
    \right.\right. \nonumber\\ & \qquad \left.\left.
    + \frac{1}{N} \sum_{n=1}^N \etaRV_n^2
- \int_{\inputspace}  E(\input, \mathbf{\ampl})^2 p_{\inputRV}(\input) d\input  - \sigma_{\etaRV}^2 \right| > \epsilon_0 \right) \nonumber\\  
    \leq & \ P\left( \sup_{\mathbf \ampl \in C_m}\left\{ \left|\frac{1}{N} \sum_{n=1}^N E(\inputRV_n, \mathbf{\ampl})^2 - \int_{\inputspace}  E(\input, \mathbf{\ampl})^2 p_{\inputRV}(\input) d\input \right|
    \right.\right. \nonumber\\ & \  \left.\left.
    +\left|\frac{2}{N} \sum_{n=1}^N \etaRV_n E(\inputRV_n, \mathbf{\ampl})\right|
    +
    \left|\frac{1}{N} \sum_{n=1}^N \etaRV_n^2 - \sigma_{\etaRV}^2\right| \right\}   > \epsilon_0 \right) \nonumber\\
    \leq & \ P\left( \sup_{\mathbf \ampl \in C_m} \left|\frac{1}{N} \sum_{n=1}^N E(\inputRV_n, \mathbf{\ampl})^2 - \int_{\inputspace}  E(\input, \mathbf{\ampl})^2 p_{\inputRV}(\input) d\input \right|
    \right. \nonumber\\ & \  \left.
    +\sup_{\mathbf \ampl \in C_m}\left|\frac{2}{N} \sum_{n=1}^N \etaRV_n E(\inputRV_n, \mathbf{\ampl})\right|
    +
    \left|\frac{1}{N} \sum_{n=1}^N \etaRV_n^2 - \sigma_{\etaRV}^2\right|    > \epsilon_0 \right) \nonumber\\
    \leq & \  P\left( \sup_{\mathbf \ampl \in C_m} \left|\frac{1}{N} \sum_{n=1}^N E(\inputRV_n, \mathbf{\ampl})^2 - \int_{\inputspace}  E(\input, \mathbf{\ampl})^2 p_{\inputRV}(\input) d\input \right| > \epsilon_1
    \right. \nonumber\\ & \qquad \left.
    \mathrm{\ or\ } \sup_{\mathbf \ampl \in C_m}\left|\frac{2}{N} \sum_{n=1}^N \etaRV_n  E(\inputRV_n, \mathbf{\ampl})\right| > \epsilon_2
\right. \nonumber\\ & \qquad \left.
        \mathrm{\ or\ } 
    \left|\frac{1}{N} \sum_{n=1}^N \etaRV_n^2 - \sigma_{\etaRV}^2\right|    > \epsilon_3 \right) \nonumber\\
    \leq & \  P\left( \sup_{\mathbf \ampl \in C_m} \left|\frac{1}{N} \sum_{n=1}^N E(\inputRV_n, \mathbf{\ampl})^2 - \int_{\inputspace}  E(\input, \mathbf{\ampl})^2 p_{\inputRV}(\input) d\input \right| > \epsilon_1 \right)
     \nonumber\\ & \  
    + P\left(\sup_{\mathbf \ampl \in C_m}\left|\frac{2}{N} \sum_{n=1}^N \etaRV_n   E(\inputRV_n, \mathbf{\ampl})\right| > \epsilon_2
    \right)
    \nonumber\\
    & \ + P\left(
    \left|\frac{1}{N} \sum_{n=1}^N \etaRV_n^2 - \sigma_{\etaRV}^2\right|    > \epsilon_3 \right). \nonumber  
\label{eq:3convs}
    \end{align}
    Of these last three probabilities, the first one is proven to converge to zero in~\eqref{eq:convprob}, while the last one converges to zero by the weak law of large numbers. For the second probability, we can make use of Theorem~\cite[Thm. 2]{jennrich1969asymptotic} again, noting that $\eta_n   E(\input_n, \mathbf{\ampl})$ is continuous in $\mathbf{\ampl}$. We use~\eqref{eq:boundforE} to get
     \begin{align}
     	\left|\eta   E(\input, \mathbf{\ampl})\right| \leq |\eta| \left(||\func||_{\infty} + m\right) \ \forall \input, \eta, \mathbf{\ampl}.
 	\end{align}
 Again, since uniform convergence implies convergence in probability, and since $\mathbb E[\etaRV_n E(\inputRV_n, \mathbf{\ampl})] = \mathbb E[\etaRV_n] \mathbb E[E(\inputRV_n, \mathbf{\ampl})] = 0$ for all $n$, using Theorem~\cite[Thm. 2]{jennrich1969asymptotic} gives the desired convergence in probability
\begin{align}
 \lim_{N\rightarrow \infty} P\left(\sup_{\mathbf{\ampl} \in C_m } \left|\frac{1}{N} \sum_{n=1}^N \etaRV_n   E(\inputRV_n, \mathbf{\ampl})\right| > \epsilon_2
     \right) = 0 \ \forall \epsilon_2 .
\end{align}
Together with the other two convergences and~\eqref{eq:3convs} we get:
\begin{align}
& \lim_{N\rightarrow \infty} P\left( \sup_{\mathbf \ampl \in C_m} \left|\frac{1}{N} \sum_{n=1}^N \tilde E(\inputRV_n, \etaRV_n, \mathbf{\ampl})^2
\right.\right.\nonumber\\ & \qquad  \left.\left. 
- \int_{\mathbb R} \int_{\inputspace} \tilde E(\input, \eta, \mathbf{\ampl})^2 p_{\inputRV}(\input) p_{\etaRV}(\eta) d\input d\eta \right| > \epsilon \right) = 0.
\end{align}

The following bound follows from~\eqref{eq:boundforE} and~\eqref{eq:Etildeint}:
    \begin{align}
    	0 & \leq \int_{\mathbb R} \int_{\inputspace} \tilde E(\input,\eta,\mathbf{\ampl})^2 p_{\inputRV}(\input) p_{\etaRV}(\eta)d\input d\eta  \nonumber\\
        & \leq \left(||\func||_{\infty} + m\right)^2 + \sigma_{\etaRV}^2.
	\end{align}
    In light of this bound, \cite[Key Thm.]{vapnik1999overview} now implies that the mean square error between the output of the RFE with least squares weight vector and the noisy meansurements is approaching its ideal value as the number of samples increases. More precisely, for any choice of $\epsilon_4 > 0$ and $\delta_1 > 0$, there exists an $N_0$ such that, for all $N > N_0$,
\begin{align}
&	\left|   \int_{\mathbb R} \int_{\inputspace}  \tilde E(\input, \eta, \mathbf{\amplLSRV})^2 p_{\inputRV}(\input) p_{\etaRV}(\eta)d\input d\eta
\right.    \nonumber\\ & \qquad \left.
    - \int_{\mathbb R} \int_{\inputspace} \tilde E(\input, \eta, \mathbf{\Ampl^0})^2 p_{\inputRV}(\input) p_{\etaRV}(\eta) d\input d\eta \right| < \epsilon_4 \label{eq:tri1}
\end{align}
with probability at least $1-\delta_1$.
Here, $\mathbf{\amplLSRV}$ denotes the vector $\mathbf{\amplLS}$ as a random variable as it depends on the input and noise samples
and on the samples $\freq_1,\ldots, \freq_{\D}, \b_1, \ldots, \b_{\D}$,
and $\mathbf{\Ampl^0}\in C_m$ minimizes $\int_{\mathbb R} \int_{\inputspace} \tilde E(\input, \eta, \mathbf \ampl) p_{\inputRV}(\input) p_{\etaRV}(\eta)d\input d\eta$.
Next, it is shown that the same holds for the mean square error between the least-squares RFE outputs and the unknown, noise-free function values.

According to~\cite[Thm 3.2]{rahimi2008uniform}, for any $\delta_2>0$, with probability at least $1-\delta_2$ w.r.t. $\freqRV_1,\dots,\freqRV_{\D}$ and $\B_1,\dots,\B_{\D}$, there exists a $\mathbf{\ampl} \in C_m$ with the following bound%
\footnote{The weights found in the proof of the cited theorem satisfy $\mathbf{\ampl} \in C_m$ if $m\geq M$, which was shown in the beginning of this appendix. Here we also made use of the result from Theorem~\ref{thm:zeroerror} of this paper to get what is denoted with $\alpha$ in~\cite{rahimi2008uniform}. We have also used, with the notation of~\cite{rahimi2008uniform}, that $||f-\hat f||_{\mu}\leq ||f-\hat f||_{\infty}$.}%
:
	\begin{align} \label{eq:gammadef}
&\int_{\inputspace}\left(\func(\input) - \sum_{\k=1}^{\D} \ampl_{\k} \cos(\freqRV_{\k}^T \input + \B_{\k})\right)^2 p_{\inputRV}(\input)d\input   < \frac{\gamma(\delta_2)^2}{\D},\nonumber\\
&\gamma(\delta_2) = \sup_{\freq,\b} \left|\frac{1}{(2\pi)^{\d}}\frac{\amplTH(\freq,\b)}{p_{\freqRV}(\freq) p_{\B}(\b)}\right| \left(\sqrt{\log \frac{1}{\delta_2}} + 4r\right),\nonumber\\
&r = \sup_{\input \in \inputspace} ||\input||_2 \sqrt{\sigma^2 \d + \pi^2/3},
	\end{align}
    with $\sigma^2$ denoting the variance of $p_{\freqRV}$.
For this particular $\mathbf{\ampl}$, \eqref{eq:defEtilde},~\eqref{eq:Etildeint} and \eqref{eq:gammadef} imply that
\begin{align}
\int_{\mathbb R} \int_{\inputspace} \tilde E(\input,\eta,\mathbf{\ampl})^2 p_{\inputRV}(\input) p_{\etaRV}(\eta) d\input d\eta   < \frac{\gamma(\delta_2)^2}{\D} + \sigma_{\etaRV}^2.
\end{align}

Since $\mathbf{\Ampl^0} \in C_m$ minimizes the left-hand in the equation above by definition,
we also have that
	\begin{align}
	\int_{\mathbb R} \int_{\inputspace}  \tilde E(\input, \eta, \mathbf{\Ampl^0})^2 p_{\inputRV}(\input) p_{\etaRV}(\eta) d\input d\eta  & 
    < \frac{\gamma(\delta_2)^2}{\D} + \sigma_{\etaRV}^2\label{eq:tri2}
    \end{align}
    with probability at least $1-\delta_2$.    
     Since the event in \eqref{eq:tri2} only depends on $\freqRV_1,\dots,\freqRV_{\D}$ and $\B_1,\dots,\B_{\D}$, while the event in \eqref{eq:tri1} only depends on the input and noise samples, 
     we can combine these two equations as follows.
    For any choice of $\epsilon_4 > 0$, $\delta_1 > 0$ and $\delta_2 > 0$, there exists an $N_0$ such that, for all $N > N_0$,
    \begin{align}
    \int_{\mathbb R} \int_{\inputspace}  \tilde E(\input, \eta, \mathbf{\amplLSRV})^2 p_{\inputRV}(\input) p_{\etaRV}(\eta)d\input d\eta < \epsilon_4 + \frac{\gamma(\delta_2)^2}{\D} + \sigma_{\etaRV}^2
    \end{align}
    with probability at least $(1-\delta_1)(1-\delta_2)$.   
    Using~\eqref{eq:Etildeint} now gives the following result.
    For any choice of $\epsilon_4 > 0$, $\delta_1 > 0$ and $\delta_2 > 0$, there exists an $N_0$ such that, for all $N > N_0$, we have
    \begin{align}
    \int_{\inputspace} E(\input, \mathbf{\amplLSRV})^2 p_{\inputRV}(\input)d\input < \epsilon_4 + \frac{\gamma(\delta_2)^2}{\D}
    \end{align}
    with probability at least $(1-\delta_1)(1-\delta_2)$.
    
    Choosing $\D_0, \epsilon_4, \delta_1$ and $\delta_2$ such that $\D_0 > \gamma(\delta_2)^2/(\epsilon-\epsilon_4)$ and $(1-\delta_1)(1-\delta_2) = \delta$ concludes the proof.
    
\end{proof}

\section{Minimum-variance properties}
\label{app:pw}

The following theorem presents the probability density function for $\freqRV_{\k}$ that minimizes the variance of a RFE at a fixed measurement location $\input$.

\begin{theorem}\label{thm:minvar}

Given $\input$, the p.d.f. $p_{\freqRV}^*$ that minimizes the variance of the unbiased estimator $\RFERV(\input) = \sum_{\k=1}^\D C_{\k} \cos(\freqRV_{\k}^T \input + \B_{\k})$ as defined in Theorem~\ref{def:RFE}, with 
$\Ampl_{\k}$ as defined in Theorem~\ref{thm:unbiased},
is equal to 
\begin{align}\label{eq:optpdf}
p_{\freqRV}^*(\freq) & = \frac{|\hat \func(\freq)|\sqrt{\cos(2\angle \hat\func(\freq)+2\freq^T\input) + 2}}{\int_{\mathbb R^{\d}} |\hat \func(\tilde\freq)| \sqrt{\cos(2\angle \hat\func(\tilde\freq)+2\tilde\freq^T\input) + 2} d\tilde\freq}.
\end{align}    
For this choice of $p_{\freqRV}$, the variance is equal to
\begin{align}
& \frac{1}{2 \D(2\pi)^{2\d}}  \left(\int_{\mathbb R^{\d}}|\hat \func(\freq)| \sqrt{\cos(2\angle \hat\func(\freq)+2\freq^T\input) + 2} d\freq\right)^2\nonumber\\
& - \func(\input)^2.
\end{align}
\end{theorem}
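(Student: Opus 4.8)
The plan is to exploit that $\RFERV(\input)=\sum_{\k=1}^{\D} Z_{\k}$, with $Z_{\k}:=\Ampl_{\k}\cos(\freqRV_{\k}^T\input+\B_{\k})$, is a sum of i.i.d.\ terms (the pairs $(\freqRV_{\k},\B_{\k})$ being i.i.d.\ by Definition~\ref{def:RFE}), so that $\mathrm{Var}[\RFERV(\input)]=\D\,\mathrm{Var}[Z_1]=\D\,\mathbb E[Z_1^2]-\D(\mathbb E[Z_1])^2$. By Theorem~\ref{thm:unbiased} the estimator is unbiased for every admissible $p_{\freqRV}$, so $\mathbb E[Z_1]=\func(\input)/\D$ does not depend on $p_{\freqRV}$; hence minimizing the variance over $p_{\freqRV}$ reduces to minimizing the single-term second moment $\mathbb E[Z_1^2]$, which I would do with a Cauchy--Schwarz argument as in the minimum-variance part of Theorem~\ref{thm:minvarcomplex}.

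First I would compute $\mathbb E[Z_1^2]$ by averaging first over $\B_1\sim\mathrm{Uniform}(0,2\pi)$. Inserting $\Ampl_1$ from Theorem~\ref{thm:unbiased} gives
\begin{align*}
Z_1^2 &= \frac{4}{\D^2(2\pi)^{2\d}}\,\frac{|\hat\func(\freqRV_1)|^2}{p_{\freqRV}(\freqRV_1)^2}\,\cos^2\!\big(\angle\hat\func(\freqRV_1)-\B_1\big)\cos^2\!\big(\freqRV_1^T\input+\B_1\big),
\end{align*}
and the product-to-sum identities yield, for all real $\phi,\psi$, $\tfrac{1}{2\pi}\int_0^{2\pi}\cos^2(\phi-\b)\cos^2(\psi+\b)\,d\b=\tfrac18\big(\cos(2\phi+2\psi)+2\big)$. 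Writing $h(\freq):=|\hat\func(\freq)|\sqrt{\cos(2\angle\hat\func(\freq)+2\freq^T\input)+2}$, which is well defined and nonnegative since $\cos(\cdot)+2\ge 1$, and then taking the expectation over $\freqRV_1\sim p_{\freqRV}$ leaves
\begin{align*}
\mathbb E[Z_1^2] &= \frac{1}{2\D^2(2\pi)^{2\d}}\int_{\mathbb R^{\d}}\frac{h(\freq)^2}{p_{\freqRV}(\freq)}\,d\freq.
\end{align*}

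It then remains to minimize $\int_{\mathbb R^{\d}} h(\freq)^2/p(\freq)\,d\freq$ over probability densities $p$ that are continuous and strictly positive on $\{\,\freq:|\hat\func(\freq)|>0\,\}$. By the Cauchy--Schwarz inequality,
\begin{align*}
\Big(\int_{\mathbb R^{\d}} h(\freq)\,d\freq\Big)^2 &= \Big(\int_{\mathbb R^{\d}}\frac{h(\freq)}{\sqrt{p(\freq)}}\,\sqrt{p(\freq)}\,d\freq\Big)^2\\
&\le \Big(\int_{\mathbb R^{\d}}\frac{h(\freq)^2}{p(\freq)}\,d\freq\Big)\Big(\int_{\mathbb R^{\d}}p(\freq)\,d\freq\Big) = \int_{\mathbb R^{\d}}\frac{h(\freq)^2}{p(\freq)}\,d\freq,
\end{align*}
with equality if and only if $\sqrt{p}$ is proportional to $h/\sqrt{p}$, i.e.\ $p\propto h$ --- which is precisely $p_{\freqRV}^{*}$ of \eqref{eq:optpdf}, a valid density that is continuous and strictly positive wherever $|\hat\func|>0$. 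Substituting $p_{\freqRV}^{*}$ back gives $\int_{\mathbb R^{\d}} h(\freq)^2/p_{\freqRV}^{*}(\freq)\,d\freq = \big(\int_{\mathbb R^{\d}} h(\freq)\,d\freq\big)^2$, so $\mathbb E[Z_1^2]$ attains its minimum $\tfrac{1}{2\D^2(2\pi)^{2\d}}\big(\int_{\mathbb R^{\d}} h(\freq)\,d\freq\big)^2$ at $p_{\freqRV}^{*}$; inserting this into $\mathrm{Var}[\RFERV(\input)]=\D\,\mathbb E[Z_1^2]-\D(\mathbb E[Z_1])^2$ and simplifying yields the stated minimal variance.

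I do not anticipate a substantive difficulty; the computation is routine. The two places that require care are: (i) the trigonometric average over $\B_1$ --- carrying out the product-to-sum expansion so as to land on exactly $\tfrac18(\cos(2\phi+2\psi)+2)$ rather than a wrong multiple; and (ii) the admissibility and equality analysis in the Cauchy--Schwarz step --- checking that $p_{\freqRV}^{*}$ is integrable, properly normalized and strictly positive on $\{|\hat\func|>0\}$, so that $Z_1$ has finite second moment and the infimum is genuinely attained at $p_{\freqRV}^{*}$ rather than merely approached.
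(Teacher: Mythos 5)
Your argument is essentially the paper's own proof: the reduction to the single-term second moment via the i.i.d.\ structure and unbiasedness, the average over $B_1$ (your $\tfrac18(\cos(2\phi+2\psi)+2)$ is the paper's $\tfrac{\pi}{4}(\cos(2\angle\hat f(\freq)+2\freq^T\mathbf x)+2)$ divided by $2\pi$), and the optimization step, where your Cauchy--Schwarz splitting $h/\sqrt{p}\cdot\sqrt{p}$ is exactly the Jensen-inequality step the paper takes from the importance-sampling argument of Rubinstein, with the same equality condition $p\propto h$ and the same verification that $p^*_{\freqRV}\propto h$ attains the bound.

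One step does not go through as written: the final constant. From your (correct) identities $\mathrm{Var}[\RFERV(\mathbf x)]=\D\,\mathbb E[Z_1^2]-\D(\mathbb E[Z_1])^2$ and $\mathbb E[Z_1]=\func(\mathbf x)/\D$ you obtain $\frac{1}{2\D(2\pi)^{2\d}}\left(\int h\,d\freq\right)^2-\func(\mathbf x)^2/\D$, not the $-\func(\mathbf x)^2$ in the statement. The paper reaches $-\func(\mathbf x)^2$ by writing $\mathrm{Var}[\RFERV(\mathbf x)]+\func(\mathbf x)^2=\D\,\mathbb E[Z_1^2]$, i.e.\ by identifying $\mathbb E[\RFERV(\mathbf x)^2]$ with the diagonal terms only; the cross terms $\sum_{k\neq j}\mathbb E[Z_k]\mathbb E[Z_j]=(1-1/\D)\func(\mathbf x)^2$ are dropped there. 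Since this offset does not depend on $p_{\freqRV}$, the optimal density and the leading $\frac{1}{2\D(2\pi)^{2\d}}\left(\int h\,d\freq\right)^2$ term are unaffected, but your closing claim that ``simplifying yields the stated minimal variance'' would not literally hold: carried out carefully, your route produces a constant term differing from the theorem's by $(1-1/\D)\func(\mathbf x)^2$, and you would need to either reconcile this with the paper's bookkeeping or flag it explicitly.
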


\begin{proof}
The proof
is similar to the proof of~\cite[Thm. 4.3.1]{rubinstein2011simulation}. 
Let $q_{\freqRV}$ be any p.d.f. of $\freqRV_{\k}$ that satisfies $q_{\freqRV}(\freq)>0$ if $|\hat \func(\freq)|>0$. 
Let $\mathrm{Var}_{q_{\freqRV},p_{\B}}$ be the variance of $\RFERV(\input)$ under the assumption that $p_{\freqRV} = q_{\freqRV}$, $p_{\B} = \mathrm{Uniform}(0,2\pi)$, and $C_{\k}
= \frac{2}{\D(2\pi)^{\d}} \frac{|\hat \func(\freqRV_{\k})|}{q_{\freqRV}(\freqRV_{\k})}\cos(\angle \hat \func(\freqRV_{\k}) - \B_{\k})$. 
According to Theorem~\ref{thm:unbiased}, this choice for $\Ampl_{\k}$ makes sure that $\RFERV(\input)$ is an unbiased estimator, i.e., $f(\input)=\mathbb{E}[\RFERV(\input)]$.
The variance of $\RFERV(\input)$ can be computed as:
\begin{align}
& 	\mathrm{Var}_{q_{\freqRV},p_{\B}}[\RFERV(\input)]\nonumber\\
& =
    \mathrm{Var}_{q_{\freqRV},p_{\B}}\left[\sum_{\k=1}^\D \Ampl_{\k} \cos(\freqRV_{\k}^T \input + B_{\k})\right] \nonumber\\
    & = \D \ \mathrm{Var}_{q_{\freqRV},p_{\B}}\left[\Ampl_{1} \cos(\freqRV_1^T \input + B_1)\right]\nonumber\\
    & = 
    \frac{\D}{2\pi} \int_{\mathbb R^{\d}} \int_0^{2\pi}\left(\frac{2}{\D (2\pi)^{\d}} \frac{|\hat \func(\freq)|}{q_{\freqRV}(\freq)}
    \cos(\angle \hat \func(\freq)-\b)\right)^2
    \nonumber\\ & \qquad
    \cos(\freq^T \input + \b)^2 q_{\freqRV}(\freq)d\b d\freq - \func(\input)^2. \label{eq:VarRFE}
\end{align}

For the stated choice of $p_{\freqRV}^*$, 
using 
\begin{align}\label{eq:cosby}
 & \int_0^{2\pi}\cos(\angle \hat \func(\freq) - \b)^2 \cos(\freq^T \input + \b)^2 d\b\nonumber\\
  = & \int_0^{2\pi} \frac{1}{4}(1+\cos(2\angle \hat \func(\freq) - 2\b))(1+\cos(2\freq^T \input + 2\b)) db\nonumber\\
  = & \int_0^{2\pi} \frac{1}{4}d\b + \frac{1}{4} \int_0^{2\pi}\cos(2\angle \hat \func(\freq) - 2\b)d\b 
      \nonumber\\ & 
  +  \frac{1}{4} \int_0^{2\pi}\cos(2\freq^T \input + 2\b)d\b
    \nonumber\\ & 
  + \frac{1}{4} \int_0^{2\pi} \cos(2\angle \hat \func(\freq) - 2\b)\cos(2\freq^T \input + 2\b)d\b \nonumber\\
  = &  \frac{2\pi}{4} + \frac{1}{8}\int_0^{2\pi}\cos(2\angle \hat \func(\freq) + 2\freq^T \input) 
      \nonumber\\ & 
      + \cos(2\angle \hat \func(\freq) -2\freq^T \input- 4\b)d\b\nonumber\\
  = & \frac{2\pi}{4} + \frac{2\pi}{8} \cos(2\angle \hat \func(\freq) +2\freq^T\input)\nonumber\\
 = & \frac{\pi}{4}(\cos(2\angle \hat\func(\freq)+2\freq^T\input) + 2)
\end{align}
we get:
\begin{align}
	& \mathrm{Var}_{p^*_{\freqRV},p_{\B}}[\RFERV(\input)] + \func(\input)^2  = \mathbb{E}_{p^*_{\freqRV},p_{\B}}[\RFERV(\input)^2]\nonumber\\
    & = \frac{\D}{2\pi} \int_{\mathbb R^{\d}} \int_0^{2\pi} \left(\frac{2}{\D(2\pi)^{\d}} \frac{|\hat \func(\freq)|}{p^*_{\freqRV}(\freq)}\cos(\angle \hat \func(\freq) - \b) \right)^2
    \nonumber\\ & \qquad
    \cos(\freq^T \input + \b)^2 p^*_{\freqRV}(\freq)d\b d\freq\nonumber\\
    & = \frac{\D}{2\pi} \int_{\mathbb R^{\d}}  \frac{1}{p^*_{\freqRV}(\freq)}\left(\frac{2}{\D(2\pi)^{\d}}\right)^2 |\hat \func(\freq)|^2 
        \nonumber \\*   & \qquad
    \int_0^{2\pi}\cos(\angle \hat \func(\freq) - \b)^2 \cos(\freq^T \input + \b)^2 d\b d\freq\nonumber\\
    & = \frac{\D}{2\pi} \int_{\mathbb R^{\d}}  \frac{1}{p^*_{\freqRV}(\freq)}\left(\frac{2}{\D(2\pi)^{\d}}\right)^2 |\hat \func(\freq)|^2
    \nonumber\\ & \qquad
    \frac{\pi}{4}(\cos(2\angle \hat\func(\freq)+2\freq^T\input) + 2) d\freq\label{eq:varRFE2}\\
    & \stackrel{\eqref{eq:optpdf}}{=} \frac{\D}{2\pi}   \left(\frac{2}{\D(2\pi)^{\d}}\right)^2
    \nonumber\\ & \qquad
    \left(\int_{\mathbb R^{\d}}|\hat \func(\freq)| \sqrt{\frac{\pi}{4}(\cos(2\angle \hat\func(\freq)+2\freq^T\input) + 2)} d\freq\right)^2\nonumber\\
    & = \frac{1}{2D(2\pi)^{2\d}}   \left(\int_{\mathbb R^{\d}}|\hat \func(\freq)| \sqrt{(\cos(2\angle \hat\func(\freq)+2\freq^T\input) + 2)} d\freq\right)^2
    \end{align}
This gives the value of the optimal variance. To show that the variance is indeed optimal, compare it with any arbitrary p.d.f. $q_{\freqRV}$ using Jensen's
    inequality:
    \begin{align}
    & \mathrm{Var}_{p^*_{\freqRV},p_{\B}}[\RFERV(\input)] + \func(\input)^2\nonumber\\
    & = \frac{\D}{2\pi}   \left(\frac{2}{\D(2\pi)^{\d}}\right)^2 
    \nonumber\\ & \quad
    \left(\int_{\mathbb R^{\d}}\frac{|\hat \func(\freq)|}{q_{\freqRV}(\freq)} \sqrt{\frac{\pi}{4}(\cos(2\angle \hat\func(\freq)+2\freq^T\input) + 2)} q_{\freqRV}(\freq)d\freq\right)^2
    \nonumber\\
    & \stackrel{\mathrm{Jensen}}{\leq}\frac{\D}{2\pi}\left(\frac{2}{\D(2\pi)^{\d}}\right)^2 
        \nonumber\\ & \quad
    \int_{\mathbb R^{\d}} \frac{|\hat \func(\freq)|^2}{q_{\freqRV}(\freq)^2} \frac{\pi}{4}(\cos(2\angle \hat\func(\freq)+2\freq^T\input) + 2) q_{\freqRV}(\freq)d\freq \nonumber\\
& \stackrel{\eqref{eq:cosby}}{=}\frac{\D}{2\pi} \int_{\mathbb R^{\d}} \int_0^{2\pi}\left(\frac{2}{\D (2\pi)^{\d}} \frac{|\hat \func(\freq)|}{q_{\freqRV}(\freq)} \cos(\angle \hat \func(\freq)-\b)\right)^2 \nonumber\\
& \qquad \cos(\freq^T \input + \b)^2 q_{\freqRV}(\freq)d\b d\freq \nonumber\\
& \stackrel{\eqref{eq:VarRFE}}{=} \mathrm{Var}_{q_{\freqRV},p_{\B}}[\RFERV(\input)] + \func(\input)^2.
\end{align}

This shows that the chosen p.d.f. $p^*_{\freqRV}$ gives the minimum variance.

\end{proof}

The following theorem compares the second moments in real and complex RFEs for different probability distributions.

\begin{theorem}\label{thm:Vardistances}
Let $\tilde p_{\freqRV}$, $p^*_{\freqRV}$, $\tilde \RFERV$ and $\RFERV$ be as in Theorems~\ref{thm:minvarcomplex} and \ref{thm:minvar}. Then
\begin{align}
	\frac{1}{\sqrt 3} \mathbb{E}_{p^*_{\freqRV},p_{\B}}[\RFERV(\input)^2] & \leq \mathbb{E}_{\tilde p_{\freqRV},p_{\B}}[\RFERV(\input)^2] \leq \sqrt 3 \ \mathbb{E}_{p^*_{\freqRV},p_{\B}}[\RFERV(\input)^2],\label{eq:minvarcomplexA}\\
     \frac{1}{2}\mathbb{E}_{\tilde p_{\freqRV},p_{\B}}[\tilde \RFERV(\input)^2] & \leq \mathbb{E}_{\tilde p_{\freqRV},p_{\B}}[\RFERV(\input)^2] \leq \frac{3}{2} \ \mathbb{E}_{\tilde p_{\freqRV},p_{\B}}[\tilde \RFERV(\input)^2]. \label{eq:minvarcomplexB}
\end{align}
\end{theorem}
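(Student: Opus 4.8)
The plan is to express the three second moments appearing in \eqref{eq:minvarcomplexA}--\eqref{eq:minvarcomplexB} in closed form as integrals of $|\hat\func|$ against a weight that is pinned between $1$ and $3$, and then to obtain both chains of inequalities from elementary estimates. I would begin by abbreviating $h(\freq) = \cos(2\angle\hat\func(\freq)+2\freq^T\input)+2$, so that $1\le h(\freq)\le 3$ for all $\freq$, and by setting $I_0 = \int_{\mathbb R^{\d}}|\hat\func(\freq)|\,d\freq$, $I_1 = \int_{\mathbb R^{\d}}|\hat\func(\freq)|\sqrt{h(\freq)}\,d\freq$ and $I_2 = \int_{\mathbb R^{\d}}|\hat\func(\freq)|\,h(\freq)\,d\freq$. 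The intermediate display \eqref{eq:varRFE2} in the proof of Theorem~\ref{thm:minvar}, combined with \eqref{eq:cosby}, already yields for any admissible p.d.f. $q_{\freqRV}$ the identity
\begin{align}
\mathbb{E}_{q_{\freqRV},p_{\B}}[\RFERV(\input)^2] = \frac{1}{2\D(2\pi)^{2\d}}\int_{\mathbb R^{\d}}\frac{|\hat\func(\freq)|^2\,h(\freq)}{q_{\freqRV}(\freq)}\,d\freq .
\end{align}
Substituting $q_{\freqRV}=p^*_{\freqRV}$ from \eqref{eq:optpdf} reproduces $\mathbb{E}_{p^*_{\freqRV},p_{\B}}[\RFERV(\input)^2]=I_1^2/(2\D(2\pi)^{2\d})$; substituting $q_{\freqRV}=\tilde p_{\freqRV}=|\hat\func|/I_0$ from \eqref{eq:optpdfwComplex} gives $\mathbb{E}_{\tilde p_{\freqRV},p_{\B}}[\RFERV(\input)^2]=I_0I_2/(2\D(2\pi)^{2\d})$; and \eqref{eq:minvarcomplex} together with the unbiasedness $\mathbb{E}[\tilde\RFERV(\input)]=\func(\input)$ of Theorem~\ref{thm:minvarcomplex} gives $\mathbb{E}_{\tilde p_{\freqRV},p_{\B}}[\tilde\RFERV(\input)^2]=\mathrm{Var}[\tilde\RFERV(\input)]+\func(\input)^2=I_0^2/(\D(2\pi)^{2\d})$. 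After this reduction, \eqref{eq:minvarcomplexA} becomes exactly $I_1^2/\sqrt3\le I_0I_2\le\sqrt3\,I_1^2$, and \eqref{eq:minvarcomplexB} becomes exactly $I_0\le I_2\le 3I_0$.

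The bound $I_0\le I_2\le 3I_0$ I would read off immediately from $1\le h\le 3$ and monotonicity of the integral; the factor $2$ discrepancy between the two normalising constants above is exactly what turns this into the constants $\tfrac{1}{2}$ and $\tfrac{3}{2}$ of \eqref{eq:minvarcomplexB}. For \eqref{eq:minvarcomplexA} I would normalise by $I_0$: writing $\mathbb{E}$ for expectation of $\freqRV$ under the density $|\hat\func|/I_0$, one has $I_1=I_0\,\mathbb{E}[\sqrt{h(\freqRV)}]$ and $I_2=I_0\,\mathbb{E}[h(\freqRV)]$, so with $t=\sqrt{h(\freqRV)}\in[1,\sqrt3]$ the claim reduces to $(\mathbb{E}[t])^2/\sqrt3\le\mathbb{E}[t^2]\le\sqrt3\,(\mathbb{E}[t])^2$. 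The lower bound is Jensen's inequality, $\mathbb{E}[t^2]\ge(\mathbb{E}[t])^2\ge(\mathbb{E}[t])^2/\sqrt3$. For the upper bound I would chain two one-line estimates: $t\le\sqrt3$ gives $t^2\le\sqrt3\,t$ and hence $\mathbb{E}[t^2]\le\sqrt3\,\mathbb{E}[t]$, while $t\ge1$ gives $\mathbb{E}[t]\ge1$ and hence $\mathbb{E}[t]\le(\mathbb{E}[t])^2$; combining them yields $\mathbb{E}[t^2]\le\sqrt3\,(\mathbb{E}[t])^2$.

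I do not expect a genuinely hard step: the whole argument is driven by the fact that $h$ lies in $[1,3]$, so the main obstacle is merely careful bookkeeping of the constants in the reduction, plus checking that $p^*_{\freqRV}$ and $\tilde p_{\freqRV}$ are admissible in the sense of Theorem~\ref{thm:unbiased} (strictly positive wherever $|\hat\func|>0$, with a convergent integral in the displayed identity), both of which hold because $\sqrt{h}\ge1$ and $h\le3$.
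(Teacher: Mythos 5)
Your proposal is correct, and the bookkeeping checks out: with $h(\freq)=\cos(2\angle\hat\func(\freq)+2\freq^T\input)+2\in[1,3]$, the general identity $\mathbb{E}_{q_{\freqRV},p_{\B}}[\RFERV(\input)^2]=\tfrac{1}{2\D(2\pi)^{2\d}}\int |\hat\func|^2 h/q_{\freqRV}$ (which is exactly \eqref{eq:VarRFE} combined with \eqref{eq:cosby}, i.e.\ the display \eqref{eq:varRFE2} for general $q_{\freqRV}$) gives the three second moments as $I_1^2$, $I_0I_2$ and $2I_0^2$ up to the common constant, with $I_0=\int|\hat\func|$, $I_1=\int|\hat\func|\sqrt h$, $I_2=\int|\hat\func|h$, and your scalar inequalities then deliver both chains. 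The second chain \eqref{eq:minvarcomplexB} is handled exactly as in the paper (the bound $1\le h\le 3$ applied under the integral against $|\hat\func|^2/\tilde p_{\freqRV}$). For the first chain \eqref{eq:minvarcomplexA} your route differs: the paper never evaluates the two moments in closed form, but instead derives the pointwise density sandwich $\tfrac{1}{\sqrt3}p^*_{\freqRV}\le\tilde p_{\freqRV}\le\sqrt3\,p^*_{\freqRV}$ (equivalently for the reciprocals) from \eqref{eq:cosineq} and compares the integrands of \eqref{eq:varRFE2} directly, whereas you reduce to $I_1^2/\sqrt3\le I_0I_2\le\sqrt3\,I_1^2$ and prove it with Jensen/Cauchy--Schwarz ($I_1^2\le I_0I_2$) plus $I_2\le\sqrt3 I_1$ and $I_0\le I_1$. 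The two arguments rest on the same two ingredients (the moment formula and $h\in[1,3]$), but yours buys a slightly sharper statement on the lower side: $\mathbb{E}_{\tilde p_{\freqRV},p_{\B}}[\RFERV(\input)^2]\ge\mathbb{E}_{p^*_{\freqRV},p_{\B}}[\RFERV(\input)^2]$ without the $1/\sqrt3$ factor, which is also what one expects from the optimality of $p^*_{\freqRV}$ in Theorem~\ref{thm:minvar}; the paper's pointwise sandwich is marginally more direct but weaker there. One cosmetic remark: your value $\mathbb{E}_{\tilde p_{\freqRV},p_{\B}}[\tilde\RFERV(\input)^2]=I_0^2/(\D(2\pi)^{2\d})$ is obtained via the variance decomposition from \eqref{eq:minvarcomplex}, which implicitly reads the second moment of the complex RFE as $\mathbb{E}[|\tilde\RFERV(\input)|^2]$; this is the same reading the paper uses in its own proof, so the two computations agree.
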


\begin{proof}
From
\begin{align}
	1 & \leq \sqrt{(\cos(2\angle \hat\func(\freq)+2\freq^T\input) + 2)} \leq \sqrt{3},\label{eq:cosineq}
\end{align}
and from~\eqref{eq:optpdf} and~\eqref{eq:optpdfwComplex} it follows that 
\begin{align}
	\frac{1}{\sqrt{3}} p^*_{\freqRV}(\freq) & \leq \tilde p_{\freqRV}(\freq)\leq \sqrt{3} p^*_{\freqRV}(\freq),\nonumber\\
	\frac{1}{\sqrt{3}}\frac{1}{p^*_{\freqRV}(\freq)} & \leq \frac{1}{\tilde p_{\freqRV}(\freq)} \leq \sqrt{3} \frac{1}{p^*_{\freqRV}(\freq)}. \label{eq:probineq}
\end{align}
Combining the above with \eqref{eq:varRFE2} yields:
\begin{align}
	& \frac{1}{\sqrt 3} \mathbb{E}_{p^*_{\freqRV},p_{\B}}[\RFERV(\input)^2] \nonumber\\
    & = \frac{1}{\sqrt 3} 
    \frac{1}{2\D (2\pi)^{2\d}} \nonumber\\
    & \quad
    \int_{\mathbb R^{\d}}  \frac{1}{ p^*_{\freqRV}(\freq)} |\hat \func(\freq)|^2 (\cos(2\angle \hat\func(\freq)+2\freq^T\input) + 2) d\freq \nonumber\\
    & \leq \frac{1}{2\D (2\pi)^{2\d}} \nonumber\\
    & \quad \int_{\mathbb R^{\d}}  \frac{1}{ \tilde p_{\freqRV}(\freq)} |\hat \func(\freq)|^2 (\cos(2\angle \hat\func(\freq)+2\freq^T\input) + 2) d\freq \nonumber\\
    & = \mathbb{E}_{\tilde p_{\freqRV},p_{\B}}[\RFERV(\input)^2]\nonumber\\
& \leq \sqrt 3 \frac{1}{2\D (2\pi)^{2\d}} \nonumber\\
 & \quad \int_{\mathbb R^{\d}}  \frac{1}{ p^*_{\freqRV}(\freq)} |\hat \func(\freq)|^2 (\cos(2\angle \hat\func(\freq)+2\freq^T\input) + 2) d\freq \nonumber\\
& =
\sqrt 3 \ \mathbb{E}_{p^*_{\freqRV},p_{\B}}[\RFERV(\input)].
\end{align}
Combining~\eqref{eq:cosineq} with~\eqref{eq:minvarcomplex} yields:
\begin{align}
	 & \frac{1}{2} \mathbb{E}_{\tilde p_{\freqRV}}[\tilde \RFERV(\input)^2]  \nonumber\\
     & =  \frac{1}{2\D (2\pi)^{2\d}} \int_{\mathbb R^{\d}}  \frac{1}{ \tilde p_{\freqRV}(\freq)} |\hat \func(\freq)|^2 d\freq \nonumber\\
     & \leq \frac{1}{2\D (2\pi)^{2\d}} \int_{\mathbb R^{\d}}  \frac{1}{ \tilde p_{\freqRV}(\freq)} |\hat \func(\freq)|^2
     \nonumber\\ & \qquad
     (\cos(2\angle \hat\func(\freq)+2\freq^T\input) + 2) d\freq \nonumber\\
     & = \mathbb{E}_{\tilde p_{\freqRV},p_{\B}}[\RFERV(\input)^2] \nonumber\\
     & \leq \frac{3}{2\D (2\pi)^{2\d}} \int_{\mathbb R^{\d}}  \frac{1}{ \tilde p_{\freqRV}(\freq)} |\hat \func(\freq)|^2 d\freq \nonumber\\
     & =  \frac{3}{2} \mathbb{E}_{\tilde p_{\freqRV}}[\tilde \RFERV(\input)^2] .
\end{align}
\end{proof}

\section*{Acknowledgment}
This research was supported by the Netherlands Enterprise Agency (RVO) for Innovation in Photonic Devices (IPD12020), by the European Research Council Advanced Grant Agreement (No. 339681) and by the Dutch Technology Foundation STW (project 13336).

\bibliographystyle{IEEEtran}
\bibliography{mybib}





\end{document}